\newcommand{\cL}{{\mathcal{L}}}
\newcommand{\cC}{{\mathcal{C}}}
\newcommand{\bC}{\textbf{C}}
\newcommand{\bR}{\textbf{R}}
\newcommand{\bc}{\textbf{c}}
\newcommand{\bbE}{\mathbb{E}}
\newcommand{\KL}{\textsc{KL}}
\newcommand{\showLegend}[6][0.25]{
\begin{minipage}{#1\textwidth}
    \centering
    \includegraphics[width=1.0\textwidth,trim={#2 #3 #4 #5},clip]{#6.pdf}
\end{minipage}
}
\newcommand{\showReturn}[7][0.25]{
\begin{minipage}{#1\textwidth}
    \caption*{\detokenize{#7}}
    \captionsetup{justification=centering}
    \centering
    \includegraphics[width=1.0\textwidth,trim={#2 #3 #4 #5},clip]{#6.pdf}
\end{minipage}
}
\newcommand{\showCost}[6][0.25]{
\begin{minipage}{#1\textwidth}
    \centering
    \includegraphics[width=1.0\textwidth,trim={#2 #3 #4 #5},clip]{#6.pdf}
\end{minipage}
}
\newcommand{\showEnv}[7][0.25]{
\begin{minipage}{#1\textwidth}
    \centering
    \includegraphics[width=1.0\textwidth,trim={#2 #3 #4 #5},clip]{#6.pdf}
    \caption*{\detokenize{#7}}
    \captionsetup{justification=centering}
\end{minipage}
}
\theoremstyle{plain}
\newtheorem{proposition}{Proposition}
\newtheorem{lemma}{Lemma}
\theoremstyle{definition}
\newtheorem{definition}{Definition}
\theoremstyle{remark}
\newcommand{\squishlist}{
 \begin{list}{$\bullet$}
  { \setlength{\itemsep}{0pt}
     \setlength{\parsep}{2pt}
     \setlength{\topsep}{2pt}
     \setlength{\partopsep}{0pt}
     \setlength{\leftmargin}{1.5em}
     \setlength{\labelwidth}{1em}
     \setlength{\labelsep}{0.5em} } }
\newcommand{\squishend}{
  \end{list}  }
\newif\ifnotes\notestrue
\title{AAAI Press Formatting Instructions \\for Authors Using \LaTeX{} --- A Guide}
\author{
    %Authors
    % All authors must be in the same font size and format.
    Written by AAAI Press Staff\textsuperscript{\rm 1}\thanks{With help from the AAAI Publications Committee.}\\
    AAAI Style Contributions by Pater Patel Schneider,
    Sunil Issar,\\
    J. Scott Penberthy,
    George Ferguson,
    Hans Guesgen,
    Francisco Cruz\equalcontrib,
    Marc Pujol-Gonzalez\equalcontrib
}
\title{My Publication Title --- Single Author}
\author {
    Author Name
}
\title{Imitate the Good and Avoid the Bad: An Incremental Approach to Safe Reinforcement Learning}
\author {
    % Authors
    Huy Hoang,
    Tien Mai,
     Pradeep Varakantham
}
\begin{document}

\maketitle

\begin{abstract}
A popular framework for enforcing safe actions in Reinforcement Learning (RL) is Constrained RL, where {\em trajectory} based constraints on expected cost (or other cost measures) are employed to enforce safety and more importantly these constraints are enforced while maximizing expected reward. Most recent approaches for solving Constrained RL convert the trajectory based cost constraint into a surrogate problem that can be solved using minor modifications to RL methods. A key drawback with such approaches is an over or underestimation of the cost constraint at each state. Therefore, we provide an approach that does not modify the trajectory based cost constraint and instead imitates ``good'' trajectories and avoids ``bad'' trajectories generated from incrementally improving policies. We employ an oracle that utilizes a reward threshold (which is varied with learning) and the overall cost constraint to label trajectories as ``good'' or ``bad''. A key advantage of our approach is that we are able to work from any starting policy or set of trajectories and improve on it. In an exhaustive set of experiments, we demonstrate that our approach is able to outperform top benchmark approaches for solving Constrained RL problems, with respect to expected cost, CVaR cost, or even unknown cost constraints. Code is available at:~\url{https://github.com/hmhuy0/SIM-RL}.

\end{abstract}

\section{Introduction}
Reinforcement learning (RL) is widely acknowledged as a powerful paradigm, thanks to its exceptional ability to learn and adapt by interacting with the environment. This adaptability has been demonstrated through numerous studies that highlight its practical applications across diverse domains. For example, reinforcement learning has been successfully employed in video games to achieve groundbreaking results~\cite{mnih2016asynchronous, firoiu2017beating}, robot manipulation tasks have been enhanced using this approach~\cite{hoang2023learning, kilinc2022reinforcement}, and even the field of healthcare has benefited from its potential~\cite{weng2017representation, raghu2017deep}. In light of the notable achievements of reinforcement learning, it is crucial to acknowledge the practical limitations that come with this approach when applied to real-world situations. The constraints of limited resources, budgetary restrictions, and safety concerns pose significant challenges in implementing reinforcement learning effectively. 

\noindent \textbf{\em Constrained RL: }
To address these challenges, Constrained Markov Decision Processes (CMDPs) have been developed as an extension of Markov Decision Processes (MDPs)~\cite{altman1999constrained}. CMDPs have emerged as a valuable framework for decision-making in various domains, as they allow for the optimization of objectives while ensuring the fulfillment of trajectory-based constraints over expected cost and other measures (e.g., CVaR). In order to tackle the challenges posed by these constraints, several Constrained RL algorithms have been proposed~\cite{yang2022constrained, zhang2020first}. State-of-the-art constrained RL approaches~\cite{satija2020constrained, chow2019lyapunov, achiam2017constrained} convert trajectory-based cost constraints into local cost constraints that can be solved easily while guaranteeing the enforcement of trajectory-based constraints. One potential issue with such local cost constraints in challenging constrained RL problems is the  estimation of cost value functions. Due to the difficulty involved in estimating the costs of partial (or full) trajectories, output policies can either be conservative or aggressive with regard to costs. 

In this work, we develop a novel principled framework that avoids the use of local cost constraints and, instead, focuses on directly solving the original constrained MDP problem, thereby avoiding cost estimation. Our innovation is rooted in the observation that, within the context of CMDP, from a given set of trajectories, it is easy to identify ``good'' trajectories that are feasible with respect to the cost constraints and offer high rewards. In contrast, ``bad'' trajectories would be identified as infeasible with respect to the cost constraints and/or yield low rewards. Subsequently, a policy that assigns high probabilities to good trajectories becomes a strong candidate for effectively addressing the CMDP problem. Hence, our approach to address CMDP involves learning a policy that replicates the actions of the good trajectories while steering clear of the bad ones. We do this by employing imitation learning, a framework designed to imitate an expert's policy based on their demonstrations.

%One key insight we build on is that in imitation learning~\cite{ho2016generative,garg2021iq}, near optimal policies can be learnt even from a few expert trajectories. 

\noindent \textbf{\em Imitation Learning:}
Imitation learning (IL) has been recognized as a compelling approach for making sequential decisions \citep{ng2000algorithms,abbeel2004apprenticeship}. In IL, a set of expert trajectories is provided, and the aim is to train a policy that replicates the behavior of the expert's policy. One of the simplest IL methods is Behavioral Cloning (BC), which mimics an expert's policy by maximizing the likelihood of the expert’s actions under the learned policy. %By relying on the expert's knowledge and expertise instead of relying solely on trial and error exploration, the learner's decision-making process is guided effectively. 
BC  is simple to implement but it disregards environmental dynamics, 
%so thus has limitations, particularly regarding distributional drift~\cite{ross2011reduction}, 
making it unable to perform as well as an expert in unseen states~\cite{ross2011reduction}. 
To address this issue, Generative Adversarial Imitation Learning~\cite{ho2016generative} and Adversarial Inverse Reinforcement Learning~\cite{fu2017learning} were introduced. These methods use adversarial training to make the agent's behavior match the expert's occupancy distribution as estimated by their discriminator. However, the adversarial training often hinders the agent from achieving expert-level performance, especially in continuous settings. ValueDICE~\cite{kostrikov2019imitation} learns a value function based on the KL divergence of the learner and expert occupancy distributions and performs well in offline settings while still incorporating adversarial training. More recent methods like PWIL~\cite{dadashi2020primal} and IQ-learn~\cite{garg2021iq} use different statistical distances for occupancy distribution and successfully eliminate the need for adversarial training.

It is important to note that imitation within our context differs from the conventional IL approaches from the aforementioned works. Here, our approach not only involves mimicking the behavior of ``good'' demonstrations but also actively avoiding the bad ones. To the best of our knowledge, this marks the first time the concept of learning to avoid bad demonstrations is introduced within the realm of IL. Additionally, in a standard IL algorithm, the set of expert demonstrations is fixed beforehand. In contrast, in our context, the set of demonstrations is generated by a pre-trained or learning policy, thus allowing it to expand as training progresses. These factors collectively pave the way for the development of a novel IL algorithm that is well-suited to our specific context.

\noindent \textbf{\em Contrastive Learning:} Our framework 
%of learning from bad and good demonstrations 
is also related to the context of Contrastive Learning (CL). CL was first introduced by~\cite{bromley1993signature} with the Siamese architecture to create a mapping function for the inputs into a target space where two similar samples should be close while two different classes should be far away. 
%Then,~\cite{chopra2005learning} proposed contrastive loss, which is one of the earliest objectives for deep contrastive learning. 
There are several famous applications of CL in computer vision~\cite{noroozi2016unsupervised,he2019moco,grill2020bootstrap}, natural language processing~\cite{clark2020electra,gao2021simcse}, recommendation systems~\cite{zhou2021contrastive,xie2021adversarial}, and reinforcement learning~\cite{fu2021towards,laskin2020curl}. Our algorithm  shares a similar spirit with CL and also marks the  first time the idea of contrastive learning being applied in IL.

%of learning  to understand the features of the good and the bad distribution to guide the agent toward the good behavior while avoiding the bad behavior. It }

 %{We are motivated by the observation that generating a small number of trajectories that exhibit both high returns and strict constraint satisfaction within a relaxed constraint setting is possible.}

%However, most existing approaches in the field face a trade-off between constraint satisfaction and exploration ability. This trade-off arises from the fact that these algorithms prioritize satisfying the constraints before they can explore for a higher expected return. Consequently, they often find themselves in a situation where they either achieve a low return while adhering to strict constraints or achieve a higher return with relaxed constraints. This inherent dilemma stems from the challenge of balancing the need for both constraint satisfaction and exploration in the reinforcement learning process.

%To address this issue, our work focuses on simultaneously satisfying strict constraints while maintaining the exploration ability typically associated with relaxed constraints. 

%Leveraging this insight, we develop a high-level agent capable of utilizing these trajectories to support the strict constraints setting. Additionally, our research demonstrates the effectiveness of a cooperative training scheme involving both agents, showcasing superior performance compared to existing state-of-the-art algorithms.

\noindent \textbf{\em Contributions:} 
%In this paper, we build on research in imitating learning to provide a novel incremental optimization approach to solve Constrained RL problems. Specifically,
We make the following contributions:
%we imitate current ``Good'' (which have high rewards and do not violate cost constraints) and avoid ``Bad'' (which have low rewards or violate cost constraints) trajectories to incrementally improve the current best policy. The improved policy can then potentially provide more ``Good'' and ``Bad'' policies, which further improve the policy and the process is continued. 
\begin{itemize}
    \item \textit{New framework for Constrained RL:} We propose a novel training framework for Constrained RL that incrementally improves an agent policy by imitating  ``good'' trajectories and avoiding  ``bad'' trajectories. The sets of ``good'' and ``bad'' trajectories are selected based on their accumulated rewards and costs and are updated as the policy is improved. 
%    \item We provide multiple practical mechanisms for imitating the ``Good'' and avoiding the ``Bad'' trajectories: (a) maximizing likelihood of ``Good'' trajectories and minimizing the likelihood of ``Bad'' demonstrations; and (b) matching the distribution of output policy with ``Good'' trajectories distribution while avoiding ``Bad'' trajectories distribution. 
    \item \textit{Theoretical insights:} We show that our way of imitating the good trajectories and avoiding the bad ones can be shown to ensure no deterioration in the output policy performance.
    \item \textit{New Learning algorithm}: We develop a non-adversarial \textit{imitate and avoid} algorithm that is able to imitate ``good'' trajectories and avoid ``bad'' trajectories. Due to the non-adversarial nature of the algorithm, it provides higher stability while being scalable.  
    %\item \textit{New learning algorithm}: We develop a new learning algorithm based on BC, but enabling online learning  (thus being dynamics aware). Our algorithm shares some synergies with prior adversarial IL algorithms \citep{ho2016generative}, but it is non-adversarial, resulting in achieving higher stability. As mentioned, our algorithm is significantly different from prior IL algorithms as it allows learning from ``bad'' demonstrations, and it works with evolving sets of demonstrations. 
    \item \textit{Experimental results:} We provide an extensive experimental results section, where we demonstrate that our approach outperforms existing best approaches on all six different environments\footnote{Existing works have typically showed results only on the simplest environment, Safety Point Goal-v0.} within the highly challenging Safety-Gym benchmark.  Furthermore, we also provide results for expected cost,  CVaR cost, and \textit{unknown cost} settings.    
\end{itemize}

\section{Background}
We present a description of the Constrained MDP problem and some popular IL approaches. 
\subsection{Constrained Markov Decision Process}
The Markov Decision Process (MDP) described in~\cite{altman1999constrained} can be represented as $\mathcal{M} = \left\langle S, A, r, P, s_0 \right\rangle$. Here, $S$ denotes the set of states, $s_0$ represents the initial state set, $A$ is the set of actions, $r: S \times A \rightarrow \mathbb{R}$ defines the reward function for each state-action pair, and $P: S \times A \rightarrow S$ is the transition function.

By introducing an additional constraint set $\mathcal{C}=\left\langle d,c_{\text{max}} \right\rangle$ to the MDP, we can formulate a Constrained Markov Decision Process (CMDP). The constraint set includes a cost function $d: S \rightarrow \mathbb{R}$ and a maximum allowed accumulated cost $c_{\text{max}}$. The objective of the CMDP is to maximize the return while ensuring that the expected accumulated cost remains below the specified maximum. Mathematically, the objective function and constraint can be expressed as follows:
\begin{equation}\tag{\sf\small CMDP}
\begin{aligned}
\label{equ:cmdp}
    &\max_\pi \mathbb{E}\left[\sum_{t=0}^{\infty} {\gamma^t}r(s_t,a_t)|s_0,\pi\right] \\
    &
    \text{  s.t.  } \quad \mathbb{E}\left[\sum_{t=0}^{\infty}  \gamma^t d(s_t)|s_0,\pi\right]\leq c_{max}.
\end{aligned}
\end{equation}
where $\pi$ represents a policy, $\gamma$ is the discount factor, and the expectation is taken with respect to the initial state and the policy.
From now, to simplify the notion, we define $R({\tau})$ and $C(\tau)$ be the expectation of return and accumulated cost on trajectories $\tau$, i.e., $R(\tau)=\sum_{(s_t,a_t) \in \tau} \gamma^t r(s_t,a_t)$, $C(\tau)=\sum_{s_t \in \tau} \gamma^t d(s_t)$.

\subsection{Imitation Learning}
\paragraph{Behavioral Cloning.}
In BC, the objective is to maximize the likelihood of the demonstrations.
\begin{equation}
\label{BC}\tag{\sf BC}
\max_{\pi} \bbE_{\tau \sim \pi^E} \Big[ \sum_{(s,a) \in \tau} \ln (\pi(a|s))\Big]    
\end{equation}
BC has a strong theoretical foundation
%, as it can guarantee to return the exact expert policy $\pi^E$, i.e., $\pi^E$ is the unique optimal solution to BC. However, BC 
but ignores environmental dynamics and 
only works with offline learning, requiring a  huge number of samples to achieve a desired performance~\cite{ross2011reduction}.

\paragraph{Distribution matching.}
A popular and useful approach for IL is based on state-action distribution matching. Specifically, let $\rho^\pi(s, a)$ be the occupancy measure of visiting state $s$ and taking action $a$, under policy $\pi$. Let $\rho^{\pi^E}$ the state-action distribution given by expert policy $\pi^E$. The distribution matching approach proposes to learn $\pi$ to minimize the discrepancy between $\rho^\pi$ and $\rho^{\pi^E}$ such as KL-divergence:
{\small\begin{equation}
\label{eq:dist_matching}
    \min_{\pi} KL\left(\rho^{\pi} ||\rho^{\pi^E} \right) = \min_{\pi}\left\{ \bbE_{(s,a)\sim \rho^\pi} \left[\ln \frac{\rho^{\pi^E}(s,a)}{\rho^{\pi}(s,a)}\frac{}{}\right] \right\}
\end{equation}}
Approaches based on distributional matching include some state-of-the-art IL algorithms such as adversarial IL \citep{ho2016generative,fu2017learning} or IQ-learn \citep{garg2021iq}.
%One common method for achieving state-action distribution matching is the behavior cloning algorithm. This technique involves training a neural network to directly map states to actions, imitating the expert's demonstrated actions. The neural network is trained to minimize the discrepancy between the predicted actions of the agent and the actions taken by the expert on a dataset of expert demonstrations.

%Some previous works attempt to estimate the distribution ratio between the expert and the learner, which is then utilized during training. However, these methods face challenges with adversarial training when estimating the ratio. In contrast, our approach also employs a ratio-based distribution matching but eliminates the need for adversarial training. This results in a more efficient and effective IL process.

\section{Self-Imitation Learning Approach}

Before describing our learning approach, we define ``Good'' and ``Bad'' trajectories:
\begin{definition}
\label{def:good-bad}
\textit{A trajectory, $\tau$ is a good trajectory if: $R(\tau) \geq R_G$ and $C(\tau) \leq c_{max}$. On the other, a trajectory, $\tau$ is a bad trajectory if $R(\tau) < R_B$ or $C(\tau) > c_{max}$.}
\end{definition}
Here, $R_G$ and $R_B$ represent some predefined\footnote{We provide an in-depth analysis on the selection of these hyperparameters and changing them during the learning for certain problems. } thresholds for selecting good and bad trajectories, respectively. We denote $\Omega^G$ and $\Omega^B$ as the set of good and bad trajectories respectively.
%%%%%%%%%%%%%%%%%%
\subsection{Learning from Good and Bad Demonstrations}
Our aim is to train an RL agent to imitate the good behavior from a  set of good demonstrations (trajectories) and avoid the bad demonstrations. In other words, we try to mimic the good part of the pre-trained policy and avoid the bad part. 

\noindent\textbf{Behavior Cloning GB:} When using a Behavior Cloning, BC type approach to achieve the above objective,  the aim is to maximize the likelihood of the good set while minimizing the likelihood of the bad one. The training objective can be written as:
\begin{multline}
    \label{BC-good-bad}\tag{\textsf{BC-GB}}
    \max_{\pi}\left\{ \lambda \bbE_{\substack{\tau \sim \pi^0\\\tau \in \Omega^G}} \Big[ \sum_{(s,a) \in \tau} \ln (\pi(a|s))\Big] \right. \\
    \left. - (1-\lambda) \bbE_{\substack{\tau \sim \pi^0\\\tau \in \Omega^B}} \Big[ \sum_{(s,a) \in \tau} \phi(\ln (\pi(a|s)))\Big] \right\}
\end{multline}
where $\phi(\cdot)$ is a monotone regularizer mapping $(-\infty,0)$ to a finite interval,  and 
$\lambda\in [0,1]$ is a parameter capturing the impact of each good or bad set on the objective function, and $\pi^0$ is a starting policy that we want to improve upon. We use $\pi^0$ instead of $\pi^E$ as the starting policy is not necessarily an expert one. 
If  $\lambda = 1$, then we only learn from good demonstrations and ignore bad ones, and $\lambda = 0$ otherwise. We incorporate the regularization term $\phi(\cdot)$ in this context to address a critical concern. Without this regularization, the maximization process could drive the value of $\ln(\pi(a|s))$ in the second term of equation \eqref{BC-good-bad} towards negative infinity, leading to an unbounded and numerically unstable objective. Intuitively,  to improve the objective in \eqref{BC-good-bad}, it is necessary for the policy to allocate higher probabilities to trajectories in the good set while assigning lower probabilities to trajectories in the bad set. 

\noindent\textbf{Distribution Matching GB:} In the realm of distribution matching, the learning process entails a delicate balance. It involves minimizing the Kullback-Leibler (KL) divergence between the occupancy measures of the policy under consideration, denoted as $\rho^\pi$, and the good trajectories, represented as $\rho^{G}$. Simultaneously, the goal is to maximize the KL divergence between $\rho^\pi$ and the occupancy measure corresponding to bad trajectories, denoted as $\rho^{B}$. This dual divergence approach aims to shape the policy by aligning it closely with the good trajectories while also distancing it from the bad ones.
{These ``good'' and ``bad'' occupancy measures can be computed as:
$\rho^{G}(s,a) = (1-\gamma)\sum_{t=0}^\infty \gamma^t p_t(s,a|\Omega^G),
$
where $p_t(s,a|\Omega^G)$ is the probability that $(s_t,a_t)$ is in the set $\Omega^G$ and  $(s_t,a_t) = (s,a)$. Similarly, $\rho^{B}(s,a)$ can be computed in the same way.
 }Then, the training objective  becomes
{\small \begin{equation}
\label{DM-GB}\tag{\sf\small DM-GB}
\min_{\pi}\left\{ \lambda \KL\left(\rho^{\pi} ||\rho^{G} \right) -  (1-\lambda) \KL\left(\rho^{\pi} ||\rho^{B}   \right) \right\} 
\end{equation}}
Intuitively, to minimize the objective function in \eqref{DM-GB}, it is necessary for the occupancy distribution to move towards $\rho^{G}$ and far away from $\rho^{B}$. Consequently, $\rho^{\pi}$ will  allocate a higher probability to a pair $(s,a)$ that appears more frequently in $\Omega^G$ than in $\Omega^{B}$, and vice-versa. 

\subsection{Theoretical Insights}
We investigate the theoretical properties of our concept of learning from good and bad demonstrations.  Our aim is to explore the question whether we can obtain improved policies by learning from good and bad demonstrations. 
Since BC-GB works directly with trajectories, we will employ it to present our theory on why intuitively using good and bad trajectories is useful. Distribution Matching GB, on the other hand, works with state-action pairs, thus is much more challenging to analyze theoretically. That is why we develop our algorithm based on Distribution Matching GB, and  show extensive empirical results with it in our experimental results to demonstrate that it is a more practical algorithm and it outperforms existing work. 

%Since our approach is trajectory-based (trajectories are classified and used), we will employ the BC formulation to present our theory. 
We first note that, in the context of maximum likelihood estimation,
   $\pi^E$ is optimal for \eqref{BC}. 
In other words, if we have sufficient samples from the expert policy, it is guaranteed that we can recover the expert policy by solving \eqref{BC}.

We now look at the BC with good and bad trajectories in \eqref{BC-good-bad}. The following lemma says that a policy that allocates zero probabilities to bad trajectories in $\Omega^B$ will be optimal for \eqref{BC-good-bad}.
\begin{lemma}\label{lm-1}
 For any $\lambda>0$, if there exists a policy $\pi^*$ such that $P_{\pi^*}(\tau) = 0$ for all $\tau \in \Omega^B$, and $
     P_{\pi^*}(\tau) = \frac{P_{\pi^0}(\tau)}{  \sum_{\tau'\in \Omega^G}P_{\pi^0}(\tau')};~\forall \tau \in \Omega^G 
     $
     then $\pi^*$ is an optimal policy to \eqref{BC-good-bad}.
\end{lemma}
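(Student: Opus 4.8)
The plan is to treat the trajectory weights as fixed constants (they are determined by sampling from the frozen policy $\pi^0$, not by the optimization variable $\pi$), so that the objective in \eqref{BC-good-bad} is a function of $\pi$ alone. Writing it as $J(\pi) = \lambda A(\pi) - (1-\lambda) B(\pi)$, where $A(\pi) = \bbE_{\tau\sim\pi^0,\,\tau\in\Omega^G}[\sum_{(s,a)\in\tau}\ln\pi(a|s)]$ is the good (log-likelihood) term and $B(\pi) = \bbE_{\tau\sim\pi^0,\,\tau\in\Omega^B}[\sum_{(s,a)\in\tau}\phi(\ln\pi(a|s))]$ is the bad (penalty) term, I would upper bound $A$ and $B$ separately and then show that the prescribed $\pi^*$ attains both bounds simultaneously.

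For the good term, first note that $\sum_{(s,a)\in\tau}\ln\pi(a|s) = \ln P_\pi(\tau) + c(\tau)$, where $c(\tau)$ collects the initial-state and transition factors and is independent of $\pi$. Hence, up to an additive constant, $A(\pi) = \sum_{\tau\in\Omega^G} q(\tau)\ln P_\pi(\tau)$, where $q(\tau) = P_{\pi^0}(\tau)/\sum_{\tau'\in\Omega^G}P_{\pi^0}(\tau')$ is the normalized weight appearing in the conditional expectation. This is a (negative) cross-entropy, and by Gibbs' inequality (equivalently, nonnegativity of a KL divergence, as in \eqref{eq:dist_matching}) it is maximized exactly when $P_\pi(\tau) = q(\tau)$ for every $\tau\in\Omega^G$. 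Because $\sum_{\tau\in\Omega^G} q(\tau) = 1$, attaining this maximum forces $P_\pi$ to place all of its mass on $\Omega^G$, i.e. $P_\pi(\tau)=0$ off $\Omega^G$ — which is precisely the defining property of $\pi^*$. So $\pi^*$ maximizes $A$.

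For the bad term, the key structural fact is that $\phi$ is monotone and maps $(-\infty,0)$ into a finite interval, so $\phi(\ln\pi(a|s)) \ge \inf\phi$, with the infimum approached as $\pi(a|s)\to 0$. This lower-bounds $B(\pi)$ and hence upper-bounds $-(1-\lambda)B(\pi)$. I would then argue that $\pi^*$, which assigns zero probability to every $\tau\in\Omega^B$, drives the relevant log-probabilities along bad trajectories to $-\infty$ and thereby attains this bound. Combining the two parts gives $J(\pi)\le \lambda A(\pi^*) - (1-\lambda)B(\pi^*) = J(\pi^*)$ for all $\pi$, which establishes optimality.

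The step I expect to be the main obstacle is the bad term, because the good term is organized by \emph{trajectory} probabilities whereas the bad penalty is a sum over \emph{individual} state-action pairs: $P_{\pi^*}(\tau)=0$ only guarantees that \emph{some} $(s,a)\in\tau$ has $\pi^*(a|s)=0$, not that every such pair does, and a pair shared by a good and a bad trajectory cannot have $\pi^*(a|s)$ both bounded away from $0$ (needed for $A$) and equal to $0$ (needed to minimize its $\phi$-penalty). The existence hypothesis on $\pi^*$ is exactly the lever that rules out this conflict — it asserts that the good and bad trajectory distributions are realizable by a single separating policy — and the delicate part of the write-up will be using this separation, together with the monotone-bounded form of $\phi$, to show the bad-term bound is genuinely tight at $\pi^*$ rather than merely that $P_{\pi^*}$ vanishes on $\Omega^B$.
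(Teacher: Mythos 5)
Your treatment of the good term is essentially the paper's. The paper proves the same fact --- that the normalized restriction of $P_{\pi^0}$ to $\Omega^G$ maximizes $\sum_{\tau\in\Omega^G}P_{\pi^0}(\tau)\ln P_{\pi}(\tau)$ subject to $\sum_{\tau}P_{\pi}(\tau)\le 1$ --- via a Lagrangian/KKT sub-lemma rather than Gibbs' inequality, but the two arguments are interchangeable and the conclusion (the prescribed $\pi^*$ attains the unconstrained maximum of the good term) is identical.

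The genuine gap is exactly the obstacle you flagged for the bad term, and your proposal does not close it. Under the literal reading of \eqref{BC-good-bad} the penalty on a bad trajectory is the per-step sum $\sum_{(s,a)\in\tau}\phi(\ln\pi(a|s))$; the hypothesis $P_{\pi^*}(\tau)=0$ forces only \emph{one} factor $\pi^*(a_t|s_t)$ along $\tau$ to vanish, and any $(s,a)$ shared with a good trajectory must have $\pi^*(a|s)>0$, so its per-step penalty sits strictly above $\inf\phi$. Your bound ``$B(\pi)\ge\sum(\text{infima of }\phi)$, attained at $\pi^*$'' is therefore false in general: a policy that zeroes out every action occurring in $\Omega^B$ (at the cost of annihilating the good term) achieves a strictly smaller $B$, so the term-by-term strategy cannot be made tight at $\pi^*$, and the existence hypothesis does not repair this because the conflict is between the two terms rather than within the realizability of $\pi^*$. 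The paper sidesteps the issue by silently rewriting the penalty as $\phi(\ln P_{\pi}(\tau))$ --- a single monotone function of the \emph{whole-trajectory} probability --- under which reading $P_{\pi^*}(\tau)=0$ places $\phi(\ln P_{\pi^*}(\tau))$ at the left endpoint of $\phi$'s finite range, the bad term is pointwise minimized at $\pi^*$, and your two bounds combine exactly as intended. So either adopt (and state) that trajectory-level form of the regularizer, in which case your outline goes through, or accept that under the per-step form the lemma requires an argument about the combined objective (e.g., trading the bounded range of $\phi$ against the unbounded log-likelihood loss on shared pairs) rather than separate tight bounds on $A$ and $B$.
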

Where $P_{\pi^*}(\tau)$ is the probability of $\tau$ given by $\pi^*$, i.e., $P_{\pi^*}(\tau) = \sum_{(s_t,a_t,s_{t+1})\in\tau} \pi^*(a_t|s_t)P(s_{t+1}|a_t,s_t)$.
There might be no policy that allocates exactly $P_{\pi}(\tau) = 0$ for all $\tau\in \Omega^B$, due to, for instance, the dynamic of the environment or the structure of $\Omega^B$. However, intuitively, a policy trying to assign small probabilities to  $(s,a)$ that appear more frequently in $\Omega^B$ than in $\Omega^G$
will move towards  $\pi^*$ (so closer to the optimal policy). 

In the proposition below we show that, if we construct a bad set consisting of trajectories having low reward values and violating the cost constraints, then it is guaranteed that the optimal policy mentioned in Lemma \ref{lm-1} will perform better than the initial policy $\pi^0$ in terms of both reward and cost constraint satisfaction.

\begin{proposition}\label{th-1}
    For any $\lambda>0$, let $\pi^0$ be a pre-trained  feasible policy, $\bR^E = \bbE_{\tau\sim\pi^0}\Big[R(\tau)\Big]$, and  $\Omega^B$ be a collection  of trajectories of low reward and high-cost values $$\Omega^B = \left\{\tau\Big|~ R(\tau) \leq \bR^E ,~ C(\tau) >c_{\max}\}\Big]\right\}$$
the  optimal policy mentioned in Lemma \ref{lm-1} is feasible to the cost constraint while offering a better expected reward than the pre-trained policy $\pi^0$, specifically, 
{\small\begin{align}
    \bbE_{\pi^*} \left[R(\tau)\right] - \bbE_{\pi^0} \left[R(\tau)\right] &=\frac{\sum_{\tau}{P_{\pi^0}(\tau) (\bR^E-R(\tau))}}{1-P_{\pi^0}(\Omega^B)} \geq 0 \label{eq:p1-eq1} \\
    \bbE_{\tau \sim \pi^*} \left[C(\tau)\right] &\leq c_{\max}\nonumber
\end{align}}
where $P_{\pi^0}(\Omega^B) = \sum_{\tau\in \Omega^B}P_{\pi^0}(\tau)$.
\end{proposition}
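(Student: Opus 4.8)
The plan is to use Lemma~\ref{lm-1} to obtain a closed form for the optimal policy $\pi^*$ and then evaluate its expected reward and cost directly against $\pi^0$. Reading $\Omega^G$ as the complement of $\Omega^B$ (so that $\sum_{\tau'\in\Omega^G}P_{\pi^0}(\tau') = 1 - P_{\pi^0}(\Omega^B)$, which is what produces the denominator in the statement), Lemma~\ref{lm-1} gives
\[
P_{\pi^*}(\tau) =
\begin{cases}
\dfrac{P_{\pi^0}(\tau)}{1 - P_{\pi^0}(\Omega^B)}, & \tau \notin \Omega^B,\\[2mm]
0, & \tau \in \Omega^B.
\end{cases}
\]
Before anything else I would record that this is well defined: since $\pi^0$ is feasible we have $\bbE_{\pi^0}[C(\tau)] \leq c_{\max}$, and because every $\tau\in\Omega^B$ satisfies $C(\tau) > c_{\max}$, the event $\Omega^B$ cannot carry all the mass, so $1 - P_{\pi^0}(\Omega^B) > 0$.

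For the reward inequality \eqref{eq:p1-eq1}, I would substitute the closed form into $\bbE_{\pi^*}[R(\tau)] = \sum_{\tau\notin\Omega^B}P_{\pi^*}(\tau)R(\tau)$, rewrite $\sum_{\tau\notin\Omega^B}P_{\pi^0}(\tau)R(\tau) = \bR^E - \sum_{\tau\in\Omega^B}P_{\pi^0}(\tau)R(\tau)$ using $\bbE_{\pi^0}[R(\tau)] = \bR^E$, and then subtract $\bR^E$. Collecting terms over the common denominator $1 - P_{\pi^0}(\Omega^B)$, the cross terms cancel and the difference reduces to
\[
\bbE_{\pi^*}[R(\tau)] - \bR^E = \frac{\sum_{\tau\in\Omega^B}P_{\pi^0}(\tau)\bigl(\bR^E - R(\tau)\bigr)}{1 - P_{\pi^0}(\Omega^B)} .
\]
Nonnegativity is then immediate from the first defining property of $\Omega^B$: every $\tau\in\Omega^B$ has $R(\tau)\leq\bR^E$, so each summand is $\geq 0$ and the denominator is positive.

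For the cost inequality I would instead invoke the second defining property of $\Omega^B$ together with feasibility of $\pi^0$. Splitting $\bbE_{\pi^0}[C(\tau)] = \sum_{\tau\in\Omega^B}P_{\pi^0}(\tau)C(\tau) + \sum_{\tau\notin\Omega^B}P_{\pi^0}(\tau)C(\tau) \leq c_{\max}$ and using $C(\tau) > c_{\max}$ on $\Omega^B$ to lower-bound the first sum by $c_{\max}P_{\pi^0}(\Omega^B)$, I obtain $\sum_{\tau\notin\Omega^B}P_{\pi^0}(\tau)C(\tau) \leq c_{\max}\bigl(1 - P_{\pi^0}(\Omega^B)\bigr)$. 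Dividing by $1 - P_{\pi^0}(\Omega^B)$ gives $\bbE_{\pi^*}[C(\tau)] \leq c_{\max}$.

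The two conditions defining $\Omega^B$ thus play complementary roles, with $R(\tau)\leq\bR^E$ securing the reward improvement and $C(\tau) > c_{\max}$ securing feasibility. I expect the only real subtlety to be bookkeeping in the reward step: one must be careful that the numerator sum is effectively restricted to $\Omega^B$, since summed over \emph{all} trajectories the expression $\sum_{\tau}P_{\pi^0}(\tau)(\bR^E - R(\tau))$ vanishes identically (it equals $\bR^E - \bbE_{\pi^0}[R(\tau)]$) and would suggest no improvement; the contribution of the complement is precisely what cancels against $\bR^E$, leaving the genuinely nonnegative $\Omega^B$ sum.
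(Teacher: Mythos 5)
Your proof is correct and follows essentially the same route as the paper's: the same closed form for $P_{\pi^*}$ from Lemma~\ref{lm-1} (with $\Omega^G$ taken as the complement of $\Omega^B$), the same telescoping of the reward difference down to $\sum_{\tau\in\Omega^B}P_{\pi^0}(\tau)(\bR^E-R(\tau))/(1-P_{\pi^0}(\Omega^B))\geq 0$, and the same two ingredients ($C(\tau)>c_{\max}$ on $\Omega^B$ plus feasibility of $\pi^0$) for the cost bound, which you merely organize as a direct bound rather than via the intermediate comparison $\bbE_{\pi^*}[C(\tau)]\leq \bC^E\leq c_{\max}$. Your two added observations --- that $1-P_{\pi^0}(\Omega^B)>0$ follows from feasibility, and that the numerator in \eqref{eq:p1-eq1} must be read as a sum over $\tau\in\Omega^B$ only (over all $\tau$ it would vanish) --- are both correct and tighten points the paper leaves implicit.
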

The inequality  in  \eqref{eq:p1-eq1} suggests that increasing the proportion or total probability of the bad set $\Omega$ will result in a larger gap, thereby leading to improved policy enhancement. In other words, as more bad policies are identified, the quality of $\pi^*$ improves.

The above results hold for $\lambda > 0$, also indicating that one might obtain a better policy by eliminating the probabilities of bad trajectories (trajectories with low reward and high-cost values). When $\lambda = 0$, the BC is about to learn only from the bad trajectories. Interestingly, we can show that by just learning from bad trajectories, it is not necessary to obtain a better policy.  We first state the following lemma. 

\begin{lemma}\label{lm-2}
If $\lambda = 0$, any policy $\pi^*$ such that $P_{\pi^*}(\tau) = 0$ for all $\tau \in\Omega^B$ is optimal for \eqref{BC-good-bad}.  
\end{lemma}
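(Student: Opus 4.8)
The plan is to analyze the objective \eqref{BC-good-bad} under the specialization $\lambda = 0$, where the good-trajectory term vanishes entirely and only the bad-trajectory penalty survives. First I would write out what \eqref{BC-good-bad} reduces to when $\lambda = 0$: the objective becomes $\max_\pi \left\{ -\bbE_{\tau \sim \pi^0,\, \tau \in \Omega^B} \big[ \sum_{(s,a)\in\tau} \phi(\ln \pi(a|s)) \big] \right\}$, i.e., we simply minimize $\bbE_{\tau\sim\pi^0,\,\tau\in\Omega^B}\big[\sum_{(s,a)\in\tau}\phi(\ln\pi(a|s))\big]$ over $\pi$. The expectation here is taken with respect to $\pi^0$ over the fixed set $\Omega^B$, so the sampling distribution over trajectories does not depend on the decision variable $\pi$; only the integrand $\phi(\ln\pi(a|s))$ depends on $\pi$.

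The key observation I would exploit is the stated property of the regularizer $\phi$: it is monotone and maps $(-\infty, 0)$ to a finite interval. Since $\ln\pi(a|s) \in (-\infty, 0]$ for any valid policy, $\phi(\ln\pi(a|s))$ attains its extreme (here, the value driving the penalty to its minimum) precisely when $\pi(a|s) \to 0$ for the relevant state-action pairs. I would argue that the penalty term $\sum_{(s,a)\in\tau}\phi(\ln\pi(a|s))$ is minimized over the bad trajectories exactly when the policy assigns vanishing probability to each $(s,a)$ appearing in some $\tau \in \Omega^B$, which by the definition $P_{\pi^*}(\tau) = \sum_{(s_t,a_t,s_{t+1})\in\tau}\pi^*(a_t|s_t)P(s_{t+1}|a_t,s_t)$ forces $P_{\pi^*}(\tau) = 0$ for all $\tau \in \Omega^B$. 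Conversely, any $\pi^*$ with $P_{\pi^*}(\tau) = 0$ for all bad trajectories drives the objective to its optimal (finite) value, so every such policy is a maximizer.

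Concretely, I would establish a lower bound on the penalty: for each bad trajectory the summand $\phi(\ln\pi(a|s))$ is bounded below by $\inf_{x<0}\phi(x)$ (finite, by assumption), and this infimum is approached as $\pi(a|s)\to 0$; hence the best achievable objective value is attained exactly by policies zeroing out $\Omega^B$. The main point to make carefully is that any policy satisfying $P_{\pi^*}(\tau)=0$ for all $\tau\in\Omega^B$ achieves this optimal value regardless of how it behaves elsewhere, which is precisely why learning from bad trajectories alone is insufficient: the optimality condition places no constraint whatsoever on the policy's behavior outside $\Omega^B$, so $\pi^*$ need not improve on $\pi^0$ in reward or even remain feasible.

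The main obstacle I anticipate is handling the fact that the set of minimizers is enormous and under-determined, and that $\phi$ is specified only abstractly (monotone, bounded range). I would need to be slightly careful that the infimum of $\phi$ over $(-\infty,0)$ is genuinely approached in the limit $\pi(a|s)\to 0$ and that this limiting behavior is realizable or approximable by admissible policies; depending on whether $\phi$ is required to be, say, monotone increasing with $\lim_{x\to-\infty}\phi(x)$ finite, the argument is that driving $\ln\pi(a|s)$ to $-\infty$ minimizes $\phi$ along each bad trajectory. Once that monotonicity-plus-boundedness fact pins down the minimizers as exactly the policies annihilating $\Omega^B$, the lemma follows immediately, and the contrast with Proposition \ref{th-1} — which additionally fixes the good-trajectory behavior via $\lambda>0$ — makes the point that bad-only learning cannot guarantee improvement.
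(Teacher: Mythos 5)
Your proposal is correct in substance and follows essentially the same route as the paper: with $\lambda=0$ only the bad-trajectory penalty survives, and since $\phi$ is monotone with a finite (bounded) range, the penalty is minimized by driving the probability of every bad trajectory to zero, so any $\pi^*$ annihilating $\Omega^B$ is a maximizer. One caveat: in your ``conversely'' step you work with the per-step form $\sum_{(s,a)\in\tau}\phi(\ln\pi(a|s))$ as literally written in \eqref{BC-good-bad}, under which $P_{\pi^*}(\tau)=0$ only forces \emph{one} factor $\pi^*(a_t|s_t)$ to vanish and hence does not by itself put every summand at its infimum; the paper sidesteps this by applying $\phi$ to the whole trajectory log-likelihood, writing the objective as $-\sum_{\tau\in\Omega^B}P_{\pi^0}(\tau)\,\phi(\ln P_{\pi}(\tau))$, under which the claim is immediate. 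Your concluding remark that the optimality condition leaves the policy unconstrained off $\Omega^B$ (hence no reward-improvement guarantee) is exactly the point the paper makes in Proposition \ref{th-2}.
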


The following proposition tells us that learning with $\lambda = 0$ would not offer a policy improvement as in the case of $\lambda>0$.
\begin{proposition}\label{th-2}
If $\lambda = 0$, and the bad set $\Omega^B$ is selected in the same manner as in Proposition \ref{th-1}, then the optimal policy $\pi^*$ from Lemma \eqref{lm-2} is feasible, but it does not necessarily provide a higher expected reward than the policy $\pi^0$.
\end{proposition}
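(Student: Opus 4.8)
The plan is to prove the two halves of the claim separately, exploiting the key structural gap between Lemma~\ref{lm-2} and Lemma~\ref{lm-1}: when $\lambda=0$ the objective \eqref{BC-good-bad} only penalizes bad trajectories, so Lemma~\ref{lm-2} certifies that \emph{every} policy with $P_{\pi^*}(\Omega^B)=0$ is optimal --- a vastly larger solution set than the single redistributed policy pinned down in Lemma~\ref{lm-1}. I would therefore establish (a) that a feasible optimal policy exists, and (b) that even among feasible optimal policies the reward need not improve over $\pi^0$. All the difficulty lies in showing that the extra freedom created by dropping the good-set term destroys the improvement guarantee while leaving feasibility attainable.

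For feasibility I would exhibit one concrete optimal policy and check the cost constraint for it. The natural candidate is the proportional redistribution of $\pi^0$ over the non-bad trajectories,
\begin{equation*}
P_{\pi^*}(\tau)=\frac{P_{\pi^0}(\tau)}{1-P_{\pi^0}(\Omega^B)}\ \text{ for } \tau\notin\Omega^B, \qquad P_{\pi^*}(\tau)=0\ \text{ for } \tau\in\Omega^B ,
\end{equation*}
which has $P_{\pi^*}(\Omega^B)=0$ and is thus optimal by Lemma~\ref{lm-2}. Writing $\bbE_{\pi^*}[C(\tau)]=\big(\bbE_{\pi^0}[C(\tau)]-\sum_{\tau\in\Omega^B}P_{\pi^0}(\tau)C(\tau)\big)/(1-P_{\pi^0}(\Omega^B))$ and rearranging $\bbE_{\pi^*}[C(\tau)]\le c_{\max}$, the claim reduces to $\bbE_{\pi^0}[C(\tau)]-c_{\max}\le \sum_{\tau\in\Omega^B}P_{\pi^0}(\tau)(C(\tau)-c_{\max})$. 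The left-hand side is $\le 0$ because $\pi^0$ is feasible, and every summand on the right is $\ge 0$ because $C(\tau)>c_{\max}$ on $\Omega^B$ by construction; this mirrors the feasibility argument of Proposition~\ref{th-1}, so a feasible optimal policy always exists.

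For the failure of reward improvement, the decisive observation is that optimality under $\lambda=0$ does not pin down the reward: the complement of $\Omega^B$ contains \emph{all} feasible trajectories, since any $\tau$ with $C(\tau)\le c_{\max}$ lies outside $\Omega^B$ regardless of $R(\tau)$. I would construct a CMDP admitting two families of feasible trajectories, one with high reward $R_H$ and one with low reward $R_L<R_H$, calibrated so that $\bbE_{\pi^0}[R(\tau)]=\bR^E$ with $R_L<\bR^E<R_H$, plus some infeasible low-reward trajectories forming $\Omega^B$. A policy $\pi^*$ placing all its mass on the low-reward feasible family has $P_{\pi^*}(\Omega^B)=0$ (hence is optimal by Lemma~\ref{lm-2}) and is feasible (since $C(\tau)\le c_{\max}$ there), yet attains $\bbE_{\pi^*}[R(\tau)]=R_L<\bR^E=\bbE_{\pi^0}[R(\tau)]$. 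This is a feasible optimal policy strictly worse than $\pi^0$, which proves the ``not necessarily'' claim.

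The main obstacle I anticipate is the counterexample itself: it must simultaneously respect the environment dynamics so that a policy can actually concentrate probability on the low-reward feasible family, keep that family outside $\Omega^B$ and cost-feasible, and force its reward strictly below $\bR^E$; everything else is routine algebra. It is worth closing by contrasting with Proposition~\ref{th-1}: the good-set term present when $\lambda>0$ channels the redistributed mass specifically onto the high-reward good trajectories, which is exactly the mechanism guaranteeing improvement, whereas removing it ($\lambda=0$) lets the optimizer drift toward low-reward feasible trajectories and the guarantee collapses.
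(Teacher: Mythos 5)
Your proposal is correct and follows essentially the same route as the paper: feasibility is obtained by the proportional-redistribution argument (mirroring Proposition \ref{th-1}), and the failure of reward improvement is shown by a counterexample CMDP with a low-reward feasible branch onto which an optimal policy of Lemma \ref{lm-2} can shift its mass. The only difference is that the paper fully instantiates the counterexample (a five-state deterministic MDP where the optimal $\pi^*$ attains expected reward $4 < \bR^E = 5.6$) while you leave it as a blueprint, but your blueprint is trivially realizable and coincides with the paper's construction.
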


In Propositions \ref{th-1} and \ref{th-2}, it is assumed that the pre-trained policy $\pi^0$ is feasible. It is then relevant to discuss other scenarios where the expected policy is not feasible, or even when the cost function is unknown. We summarize our claims below.
\begin{proposition}\label{th-3}
    The following hold
    \begin{itemize}
        \item[(i)] If we select the bad set as $\Omega^B = \left\{\tau\Big|~ R(\tau) \leq \bR^E ,~ C(\tau) >\bC^E\}\Big]\right\}$, then it is guaranteed that  $\pi^*$ offers a higher (or equal) expected reward and lower (or equal) expected cost, compared to those from $\pi^0$, where  $\bC^E = \bbE_{\tau\sim \pi^0} [C(\tau)]$ 
        \item[(ii)] If the pre-trained policy $\pi^0$ is not feasible, then if we select the bad set as $\Omega^B = \left\{\tau\Big| C(\tau) >c_{\max}\}\Big]\right\}$, then it is guaranteed that  $\pi^*$ is feasible 
        \item[(iii)] If the cost function is not accessible, but there is an oracle that can tell us which trajectories are violating the constraint,  then by selecting, $\Omega^B = \left\{\tau\Big| \tau \text{ is violated }\Big]\right\}$, then $\pi^*$ is feasible.
    \end{itemize}
\end{proposition}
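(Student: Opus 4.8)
The plan is to exploit the fact that, for every item, $\pi^*$ is the same explicit object supplied by Lemma~\ref{lm-1}: it places zero mass on the bad set and simply renormalizes the trajectory distribution of $\pi^0$ over the complement. Written out, $P_{\pi^*}(\tau) = 0$ for $\tau\in\Omega^B$ and $P_{\pi^*}(\tau) = P_{\pi^0}(\tau)/(1-P_{\pi^0}(\Omega^B))$ otherwise, which is well defined precisely because $P_{\pi^0}(\Omega^B)<1$ (there is surviving good mass). All three claims then follow from a single reweighting identity that I would establish first: for any trajectory functional $M(\cdot)$ (later instantiated as $R$ or $C$) and any reference value $v$,
\begin{equation*}
\bbE_{\pi^*}\!\left[M(\tau)\right] - v = \frac{\big(\bbE_{\pi^0}\!\left[M(\tau)\right]-v\big) + \sum_{\tau\in\Omega^B}P_{\pi^0}(\tau)\big(v-M(\tau)\big)}{1-P_{\pi^0}(\Omega^B)}.
\end{equation*}
This is obtained by splitting $\bbE_{\pi^0}[M(\tau)]$ into its $\Omega^B$ and non-$\Omega^B$ parts and rescaling, exactly as in the proof of Proposition~\ref{th-1}.

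For item~(i) I would apply this identity twice. Taking $M=R$ and $v=\bR^E=\bbE_{\pi^0}[R(\tau)]$ annihilates the first bracket, leaving $\bbE_{\pi^*}[R(\tau)]-\bR^E = \frac{1}{1-P_{\pi^0}(\Omega^B)}\sum_{\tau\in\Omega^B}P_{\pi^0}(\tau)(\bR^E-R(\tau))\ge 0$, since every $\tau\in\Omega^B$ satisfies $R(\tau)\le\bR^E$ by construction. Symmetrically, taking $M=C$ and $v=\bC^E=\bbE_{\pi^0}[C(\tau)]$ again annihilates the first bracket, and now every $\tau\in\Omega^B$ satisfies $C(\tau)>\bC^E$, so the correction term is nonpositive and $\bbE_{\pi^*}[C(\tau)]\le\bC^E$. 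The structural point I want to highlight is that defining $\Omega^B$ relative to the policy's \emph{own} expectations $\bR^E,\bC^E$ is exactly what makes the first bracket vanish in both directions at once, yielding improvement in reward and reduction in cost simultaneously.

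Items~(ii) and~(iii) I would handle by a simpler support-containment argument rather than the identity, because here $\pi^0$ need not be feasible and the leading bracket of the identity can be positive. With $\Omega^B=\{\tau: C(\tau)>c_{\max}\}$, the policy $\pi^*$ is supported entirely on $\{\tau: C(\tau)\le c_{\max}\}$; hence $\bbE_{\pi^*}[C(\tau)]$ is a convex combination of values each at most $c_{\max}$, so it is at most $c_{\max}$ and $\pi^*$ is feasible, irrespective of the infeasibility of $\pi^0$. Item~(iii) is then identical once ``$\tau$ is violated'' is read as the oracle's stand-in for $C(\tau)>c_{\max}$: the oracle lets us form the very same $\Omega^B$ and zero it out without ever evaluating $C$, and the same convex-combination bound delivers feasibility. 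I do not anticipate a deep obstacle; the only care needed is (a) keeping the sign bookkeeping straight in~(i), and (b) recognizing that in~(ii) and~(iii) feasibility must be argued from support containment rather than from the reweighting identity, since the latter no longer has a vanishing leading term when $\pi^0$ is infeasible.
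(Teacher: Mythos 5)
Your proposal is correct and follows essentially the same route as the paper: part (i) is exactly the paper's computation (your reweighting identity with $v=\bR^E$ or $v=\bC^E$ reduces to the two displayed formulas in the paper's proof, whose signs are fixed by the defining inequalities of $\Omega^B$), and your support-containment/convex-combination argument for (ii) and (iii) is precisely the paper's bound $\bbE_{\pi^*}[C(\tau)] = \sum_{\tau\in\Omega^G}P_{\pi^0}(\tau)C(\tau)/(1-P_{\pi^0}(\Omega^B)) \leq c_{\max}$, with (iii) reduced to (ii) via the oracle in both treatments. The only cosmetic difference is that you package the calculation as a single general identity and explicitly flag the requirement $P_{\pi^0}(\Omega^B)<1$, which the paper leaves implicit.
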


The above results provide some interesting insights to understand the framework.  It is evidenced that if we learn from both good and bad trajectories, the policy will be trained towards a better one, compared to the pre-trained policy $\pi^0$. If we only use bad trajectories, then it is possible that we cannot get a better policy than $\pi^*$. This remark will be further validated in our later experiments, as we observe that one cannot learn a good policy by just using bad trajectories.
Moreover, according to Proposition \ref{th-3}, our framework can be used to train towards policies with lower cost or even feasible policies by selecting different bad sets $\Omega^E$, even when the cost function is not known beforehand.

The theory also tells us that if we are more selective in choosing good and bad trajectories, we will tend to obtain better policies, as long as there are policies that can eliminate the probabilities of the bad ones. However, the selection process can be tricky and may not be easy to achieve in practice. So, it is better to not be too selective (or conservative) in classifying good and bad demonstrations.

\subsection{Example}
\begin{figure}[htb] 
\centering
    \includegraphics[width=0.6\linewidth]{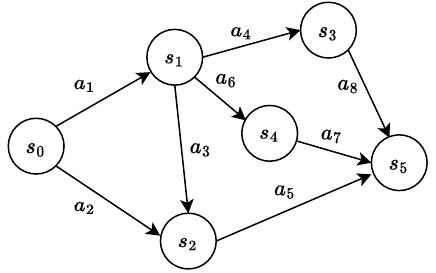} 
    \caption{Example} 
    \label{fig:SMDP} 
\end{figure}
We give a small example to demonstrate how our framework returns a better policy by learning from bad and good trajectories. We consider the small deterministic MDP given in Figure \ref{fig:SMDP}. 
%To simplify the example, we choose a deterministic MDP, i.e., any action always leads to a unique next state. 
The rewards, $r$, costs, $c$ and pre-trained policy $\pi^0$ are as shown in Table~\ref{rc}. The probabilities are over feasible actions from the state. There are 4 possible trajectories  $\tau_1 = \{s_0,s_1,s_3,s_5\},~\tau_2 = \{s_0,s_1,s_4,s_5\},~\tau_3 = \{s_0,s_1,s_2,s_5\},$ and $\tau_4 = \{s_0,s_2,s_5\}$.
\begin{table}[]
    \centering
    \begin{tabular}{|c|c|c|c|c|c|c|}
    \hline
         & $s_0$ & $s_1$ & $s_2$ & $s_3$ & $s_4$ & $s_5$ \\
         \hline
      $r$& 0 & 2 & 3 & 1 & 2 & 0 \\
      \hline
      $c$ & 0 & 1 & 1 & 3 & 1 & 0 \\
      \hline
      $\pi^0$ & 1/2, 1/2 & 1/3, 1/3, 1/3 & 1 & 1 & 1 & 1 \\
      \hline
    \end{tabular}
    \caption{Rewards, Costs and Policy}
    \label{rc}
\end{table}
%are set as  $r(s_0) = 0, r(s_1) = 2, r(s_2) = 3, r(s_3) = 1, r(s_4) = 2$, $r(s_5) = 0$  and the costs $d(s_0) = 0, d(s_1) = 1, d(s_2) = 1, d(s_3) = 3, d(s_4) = 1$, $r(s_5) = 0$. 
%A pre-trained policy $\pi^0$ is chosen as  $\pi^0(a_1|s_0) = \pi^0(a_2|s_0)= 0.5,~\pi^0(a_4|s_1) =\pi^0(a_6|s_1) = \pi^0(a_3|s_1) = 1/3$ (the other state-action probabilities are 1), and $c_{\max} = 3$.  
We then see that 
% $$
%   \bbE_{\pi^0}[R(\tau)] = \sum_{\tau}P_{\pi^0}(\tau) = 3.5;~ \bbE_{\pi^0}[C(\tau)] = \sum_{\tau}P_{\pi^0}(\tau) = 2
% $$
$\bbE_{\pi^0}[R(\tau)] = \sum_{\tau}P_{\pi^0}(\tau) = 3.5;~~
    \bbE_{\pi^0}[C(\tau)] = \sum_{\tau}P_{\pi^0}(\tau) = 2,$
 implying that $\pi^0$ is feasible for the CMDP problem. 
%In \eqref{BC}, one needs to solve the following
% \begin{multline}
%     \max_{\pi}\Big\{\pi(a_1|s_0)\pi(a_4|s_1) \ln 1/6 +  \pi(a_1|s_0)\pi(a_6|s_1) \ln 1/6 \\
%     + \pi(a_1|s_0)\pi(a_3|s_1) \ln 1/6  + \pi(a_2|s_0)\ln 1/2\Big\} 
% \end{multline}

% then it can be seen that the above achieves its maximal value at $\pi = \pi^0$. 

Under our good-bad scheme, trajectory $\tau_1 = \{s_0,s_1,s_3,s_5\}$ has the accumulated reward and cost as  $R(\tau_1) = 3 < \bbE_{\pi^0}[R(\tau)], C(\tau_1) = 4>c_{\max}$. So, according to the criteria in Theorem \ref{th-1}, $\tau_1$ should be considered a bad trajectory (the others are good). The BC objective can be written as
$F(\pi) = \lambda  \sum_{i\in\{2,3,4\}} P_{\pi^0}(\tau_i) \ln P_{\pi}(\tau_i) $. 
The following policy $\pi^*$ such that $\pi^*(a_4|s_1) = 0$, $\pi^*(a_6|s_1) = \pi^*(a_3|s_1) = 1/2$, $\pi^*(a_1|s_0) = 2/5$ and $\pi^*(a_2|s_0)=3/5$ will satisfy  the condition in Lemma \ref{lm-1}, i.e.., 
% \begin{align*}
%     &P_{\pi^*}(\tau_1);~ P_{\pi^*}(\tau_2) = 1/5 = \frac{P_{\pi^0}(\tau_2)}{5/6}; P_{\pi^*}(\tau_3) = 1/5 = \frac{P_{\pi^0}(\tau_3)}{5/6};~ P_{\pi^*}(\tau_4) = 3/5 = \frac{P_{\pi^0}(\tau_4)}{5/6}\\
%     &\text{ and } P_{\pi^0}(\tau_2)+ P_{\pi^0}(\tau_3) + P_{\pi^0}(\tau_4) = 5/6,
% \end{align*}
    $P_{\pi^*}(\tau_1);~ P_{\pi^*}(\tau_2) = 1/5 = \frac{P_{\pi^0}(\tau_2)}{5/6};~
    P_{\pi^*}(\tau_3) = 1/5 = \frac{P_{\pi^0}(\tau_3)}{5/6};~ P_{\pi^*}(\tau_4) = 3/5 = \frac{P_{\pi^0}(\tau_4)}{5/6}~
    \text{and } P_{\pi^0}(\tau_2)+ P_{\pi^0}(\tau_3) + P_{\pi^0}(\tau_4) = 5/6,$
thus $\pi^*$ is optimal for $\max_{\pi}\{F(\pi)\}$. On the other hand,
$
\bbE_{\pi^*}[R(\tau)] = 3.6; ~ \bbE_{\pi^*}[C(\tau)] = 1.6.
$
So $\pi^*$ offers a better expected reward and a lower cost compared to the pre-trained policy $\pi^0$.

\section{Self-Imitation based Safe RL}\label{sec:algorithm}
In this section, we present a practical IL-based algorithm for constrained RL.
A BC-based algorithm can be developed using  \eqref{BC-good-bad}. However, this approach (or the BC in general) would  not be practical and would necessitate a huge number of samples to attain the desired performance. In contrast, Distribution Matching proves to be a more practical alternative. Taking inspiration from the GAIL algorithm, to address \eqref{DM-GB}, one can construct two discriminators: one for $\KL\left(\rho^{\pi} ||\rho^{G} \right)$ and another for $\KL\left(\rho^{\pi} ||\rho^{B} \right)$. Nonetheless, this approach involves  two adversaries and would be highly unstable (as demonstrated in our experiments). To get rid of adversarial training, let us
put the occupancy measures of the learning policy and the good demonstrations together, and
consider the following mixed state-action distribution 
$\rho^{G,\pi} = ({\rho^{\pi} + \rho^{G} })/{2}$. We then set our aim to maximize the KL divergence between $\rho^{G,\pi}$ and the occupancy measure of the bad trajectories $\rho^{B}$ (thus making  $\rho^{G,\pi}$ far away from the ``bad'' occupancy measure $\rho^{B}$). 
\begin{equation}\label{eq:KL}
\max_{\pi} \left\{\KL\left(\rho^{G,\pi}||\rho^{B}\right)\right\} = \max_{\pi}\bbE_{(s,a)\sim \rho^{G,\pi}}\left[\ln\frac{\rho^{B}(s,a)}{\rho^{G,\pi}(s,a)}\right]    
\end{equation}
To estimate distribution ratio $\frac{\rho^{B}(s,a)}{\rho^{G,\pi}(s,a)}$, we propose the following surrogate maximization problem %using one  in one variable $K$ instead of two variables:
{\small
\label{eq:learn_classifer}
\begin{align}
    &\max_{K:S\times A\rightarrow (0,1)} \Big\{J(K,\pi):=\bbE_{\rho^B}[\ln(K(s,a))]\nonumber\\
    &+\frac{1}{2}\bbE_{\rho^{\pi}}[\ln(1-K(s,a))]+\frac{1}{2}\bbE_{\rho^{G}}[\ln(1-K(s,a))]  \Big\}\label{eq:DM-GD-eq1}
\end{align}}
Here, \eqref{eq:DM-GD-eq1} is connected to \eqref{eq:KL} through the following result:
\begin{proposition}\label{prop:DM}
    The maximization in \eqref{eq:DM-GD-eq1} is achieved at $K^*(s,a)$ such that  
    \[
    \ln \left(\frac{K^*(a,s)}{1-K^*(s,a)}\right) = \ln\frac{\rho^{B}(s,a)}{\rho^{G,\pi}(s,a)}
    \]
\end{proposition}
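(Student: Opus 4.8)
The plan is to recognize this as a standard pointwise discriminator optimization of the GAN type and solve it by elementary calculus. First I would fold the second and third terms of $J(K,\pi)$ into a single expectation against the mixed distribution. Using the definition $\rho^{G,\pi} = (\rho^{\pi}+\rho^{G})/2$, the sum
\[
\tfrac{1}{2}\bbE_{\rho^{\pi}}[\ln(1-K(s,a))]+\tfrac{1}{2}\bbE_{\rho^{G}}[\ln(1-K(s,a))]
\]
collapses to $\bbE_{\rho^{G,\pi}}[\ln(1-K(s,a))]$, so that the objective reads $J(K,\pi)=\bbE_{\rho^{B}}[\ln K(s,a)]+\bbE_{\rho^{G,\pi}}[\ln(1-K(s,a))]$.

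Next I would exploit the fact that $K$ is an unconstrained function on $S\times A$ with values in $(0,1)$, so the optimization decouples across state-action pairs. Writing the objective as $\sum_{s,a}\big[\rho^{B}(s,a)\ln K(s,a)+\rho^{G,\pi}(s,a)\ln(1-K(s,a))\big]$, I would maximize each summand independently. Fixing $(s,a)$ and setting $p=\rho^{B}(s,a)$, $q=\rho^{G,\pi}(s,a)$, $x=K(s,a)$, the scalar problem is to maximize $f(x)=p\ln x+q\ln(1-x)$ over $x\in(0,1)$. Differentiating gives $f'(x)=p/x-q/(1-x)$, and the first-order condition $f'(x)=0$ yields $p(1-x)=qx$, hence the unique stationary point $x^{*}=p/(p+q)$. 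The second derivative $f''(x)=-p/x^{2}-q/(1-x)^{2}<0$ confirms this is a global maximum whenever $p,q>0$, so $K^{*}(s,a)=\rho^{B}(s,a)/(\rho^{B}(s,a)+\rho^{G,\pi}(s,a))$.

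Finally I would rearrange to the claimed log-odds form. Since $1-K^{*}(s,a)=\rho^{G,\pi}(s,a)/(\rho^{B}(s,a)+\rho^{G,\pi}(s,a))$, the ratio $K^{*}(s,a)/(1-K^{*}(s,a))$ equals $\rho^{B}(s,a)/\rho^{G,\pi}(s,a)$, and taking logarithms gives exactly the stated identity $\ln\!\big(K^{*}(a,s)/(1-K^{*}(s,a))\big)=\ln\big(\rho^{B}(s,a)/\rho^{G,\pi}(s,a)\big)$.

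I do not anticipate a genuine mathematical obstacle here, since the argument is a textbook concavity computation; the only point requiring a word of care is the justification that the maximization decomposes pointwise. In the discrete setting this follows immediately because maximizing a sum of independent terms amounts to maximizing each term. If one wishes to be rigorous in a continuous state-action space, I would note that the same pointwise maximizer maximizes the integrand almost everywhere with respect to the dominating measure, so it also maximizes the integral, and well-definedness of $K^{*}$ requires only that $\rho^{B}+\rho^{G,\pi}>0$ on the relevant support.
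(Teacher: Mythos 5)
Your proof is correct and follows essentially the same route as the paper's: combine the two half-weighted expectations into one against $\rho^{G,\pi}$, decouple the objective pointwise over $(s,a)$, and solve the scalar problem $\max_{x\in(0,1)}\{p\ln x+q\ln(1-x)\}$ to get $x^*=p/(p+q)$, hence the stated log-odds identity. The only cosmetic difference is that you certify the maximum via the negative second derivative, whereas the paper argues via the sign of $f'$ on either side of the stationary point; both are fine.
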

This implies that the  distribution ratio  can be estimated as  $\ln\frac{K^*(s,a)}{1-K^*(s,a)}$. As a result, the policy can be updated by maximizing $\max_{\pi}\bbE_{(s,a)\sim \rho^{G,\pi}}\left[\ln\frac{K^*(s,a)}{1-K^*(s,a)}\right]$, which is equivalent to $\max_{\pi}\bbE_{(s,a)\sim \rho^{\pi}}\left[\ln\frac{K^*(s,a)}{1-K^*(s,a)}\right]$  as the occupancy measure of the good demonstrations is constant.  

In practice, $K(s,a)$ need not be fully optimized. Instead, $K$ and $\pi$ can be updated alternatively by gradient ascent. It is important to note that we update $K(s,a)$ by maximizing $J(K,\pi)$  and update $\pi$ by maximizing  $\KL\left(\rho^{G,\pi}||\rho^{B}\right)$, so our algorithm  is \textit{non-adversarial}. In other words, $K(s,a)$ operates in a cooperative manner rather than an adversarial one  -- it collaborates with the policy $\pi$ to estimate the distribution ratio and  make  the mixed distribution $\rho^{G,\pi}$ far away from the bad one $\rho^B$. Here, the non-adversarial nature of our method stems from our approach of maximizing the KL divergence, in contrast to the minimizing aspect employed in GAIL.
Drawing from the above analyses, we proceed to outline our algorithm. Let $w$ and $\theta$ denote the parameters of $K(s,a)$ and $\pi(s,a)$ respectively. The core concept involves iteratively enhancing $K$ and $\pi$ through alternating gradient ascent updates. 
While $K_w$ can be updated by using the derivatives of $J(K_w,\pi_\theta)$, $\pi_\theta$ can be updated by a policy gradient method, e.g., PPO \cite{schulman2017proximal}.
During the training process, we generate additional trajectories and update  the good and bad sets. The  key steps of our method are described in Algorithm \ref{alg:good_bad_algo}.

%In practice, true $\Omega_{G}$ and $\Omega_{B}$ are not accessible. We must estimate them from the observed dataset $\bar{\Omega_{G}}$ and $\bar{\Omega_{B}}$. From interaction with the environment, we can collect more and more good and bad trajectories for our datasets, and this form algorithm~\ref{alg:good_bad_algo}:

\begin{algorithm}[htbp]
\caption{Self-imitation Safe Reinforcement Learning }\label{alg:good_bad_algo}

\begin{algorithmic}
\REQUIRE $\pi^0$,$K_{\omega}$,$R_{G}$,$R_{B}$ ,$c_{max}$, learning rates $\kappa_\theta,\kappa_w$
\STATE ${\Omega_{G}} \gets \emptyset$;~${\Omega_{B}} \gets \emptyset$;~ $\pi_\theta \gets \pi^0$
\WHILE{not converge}
    \STATE \# Sample new set trajectories
    \STATE $T = \{\tau_{0},\tau_{1},...,\tau_{n}\sim \pi_{\theta}\}$
    \STATE \# Update the ``good'' and ``bad'' sets
    \STATE $R_B \gets \bbE_{\tau \sim T}\left[R(\tau)\right]-\sigma_{\tau \sim T}[R(\tau)]${, \#$\sigma$ is the deviation}
    \STATE ${\Omega_{G}} \gets {\Omega_{G}} \cup \left\{\tau \in T| R(\tau)\geq R_{G} \cap C(\tau)\leq c_{max}\right\}$ 
    \STATE ${\Omega_{B}} \gets {\Omega_{B}} \cup \left\{\tau\in T| R(\tau) < R_{B} \cup C(\tau) > c_{max}\right\} $
    \STATE \# Update $K_w$
    \STATE $w \gets w + \kappa_w \nabla_w J(K_w,\pi_\theta)$
    \STATE where $J(K_w,\pi_\theta) = \bbE_{\Omega^B}[\ln(K_w(s,a))]
    +\frac{1}{2}\bbE_{T}[\ln(1-K_w(s,a))]+\frac{1}{2}\bbE_{\Omega^{G}}[\ln(1-K_w(s,a))]$
    \STATE \# Update $\pi_\theta$
    \STATE $\theta = \theta + \kappa_\theta \bbE_{\tau\sim T}\left[\nabla_{\theta} \ln\pi_\theta(s,a)Q^K(s,a) \right]$
    \STATE where $Q^K(s,a) = \bbE_{T}\left[\sum_t \gamma^t ln \frac{K_w(s_t,a_t)}{1-K_w(s_t,a_t)}\Big|\substack{s_0=s\\a_0=a}\right]$
\ENDWHILE
\end{algorithmic}
\end{algorithm}
%Here, $\pi$ and $K$ are parameterized by $\theta$ and $\omega$; $R_G$ is the good threshold and $R_B$ is calculated base on $R_G$ with $\sigma(\cdot)$ is the standard deviation function. Moreover, in practical applications, the policy optimize base on a value function $Q_K(s,a)=\bbE[\sum_{t=0}^{\infty}\gamma^t K(s_t,a_t)|s_0=s,a_0=a]$ as PPO~\cite{schulman2017proximal}.

\section{EXPERIMENTS}
\begin{figure*}[htb]
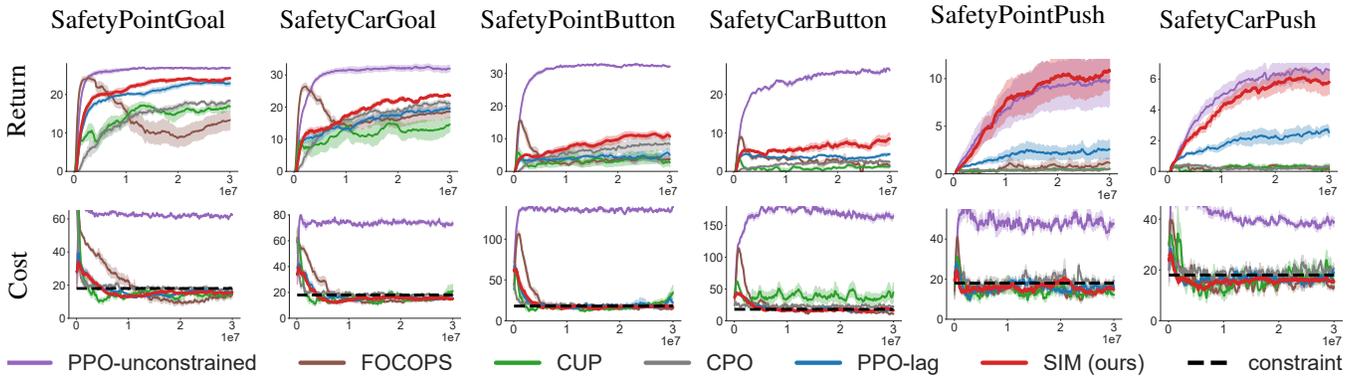

\centering
\rotatebox[origin=c]{90}{\centering Return}
\showReturn[0.15]{0}{25}{25}{25}{Images/SafetyPointGoal1-v0/return}{SafetyPointGoal}
\showReturn[0.15]{0}{25}{25}{25}{Images/SafetyCarGoal1-v0/return}{SafetyCarGoal}
\showReturn[0.15]{0}{25}{25}{25}{Images/SafetyPointButton1-v0/return}{SafetyPointButton}
\showReturn[0.15]{0}{25}{25}{25}{Images/SafetyCarButton1-v0/return}{SafetyCarButton}
\showReturn[0.15]{25}{25}{25}{25}{Images/SafetyPointPush1-v0/return}{SafetyPointPush}
\showReturn[0.15]{0}{25}{25}{25}{Images/SafetyCarPush1-v0/return}{SafetyCarPush}

\rotatebox[origin=c]{90}{\centering Cost}
\showCost[0.15]{0}{25}{25}{25}{Images/SafetyPointGoal1-v0/cost}
\showCost[0.15]{0}{25}{25}{25}{Images/SafetyCarGoal1-v0/cost}
\showCost[0.15]{25}{25}{25}{25}{Images/SafetyPointButton1-v0/cost}
\showCost[0.15]{25}{25}{25}{25}{Images/SafetyCarButton1-v0/cost}
\showCost[0.15]{10}{25}{25}{25}{Images/SafetyPointPush1-v0/cost}
\showCost[0.15]{25}{25}{25}{25}{Images/SafetyCarPush1-v0/cost}
\showLegend[1.0]{10}{10}{20}{20}{Images/legend_bar}
\caption{Training curves for 6 different SafetyGym environments. Every lines in calculated by the mean with shaded by the standard error of 6 independent seeds.}
%The upper row shows the mean return of the algorithm while the lower row provides the mean cost.}
\label{fig:Safety_gym_results}
\end{figure*}

We conduct experiments to compare our method against some state-of-the-art Constrained RL algorithms: FOCOPS~\cite{zhang2020first}, CUP~\cite{yang2022constrained}, CPO~\cite{achiam2017constrained}\footnote{FOCOPS, CUP, and CPO implementations can be found on \url{https://github.com/PKU-Alignment/omnisafe}.}. For the sake of completeness, we also include PPO-Lagrangian ~\cite{ray2019benchmarking} and unconstrained PPO. 
We use PPO-Lagrangian to train our pre-trained policy and name our algorithm as  SIM, standing for \textbf{S}elf-\textbf{IM}itation based safe RL algorithm. 
Through the following experiments, we aim to address the following questions:
(Q1) Would SIM outperform state-of-the-art constrained RL algorithms? (Q2) Is it necessary to use both good and bad demonstrations in the training? (Q3) How is SIM compared to a BC-based  and GAIL-based algorithm? (Q4) How does SIM perform with different expertise levels of the initial policy $\pi^0$? Can it benefit from a not well-trained policy?
    %, along with its corresponding return thresholds $R_G$ and $R_B$?
    %Given that CVaR (Conditional Value at Risk) concept is applied in the training to increase constraint satisfaction in ,

%In the following, we provide experiments to address Q1-Q4. Additional experiments addressing Q5-Q9 can be found in the appendix. 
We set the  cost limit as $c_{max} = 18$. We, however, train the initial policy $\pi^0$ with a higher cost limit of $c_{max}=28$, which allows us to generate more trajectories
of high rewards and more unsafe trajectories. 
%that satisfy the constraint and yield substantial return when compared to a strict constraint configuration.
We test our method on 6 SafetyGym environments~\cite {Safety-Gymnasium}. We also simplify the names of environments, e.g., SafetyPointGoal1-v0 is renamed as SafetyPointGoal. 

\subsection{SIM vs other Constrained RL methods on SafetyGym}
We compare our algorithm with prior safe RL ones using six different SafetyGym environments~\cite{ray2019benchmarking,Safety-Gymnasium}. The learning curves are shown in Figure~\ref{fig:Safety_gym_results} where the experiments are repeated over 6 independent seeds. For the sake of comparison, we include the PPO-unconstrained. Here are the key observations. In all the 6 environments, including the very challenging ones (SafetyPointButton and SafetyCarButton), SIM achieves the best performance -- it offers the highest expected rewards while being safe. The PPO-unconstrained gives the highest rewards but is unsafe by a huge margin. Notably, in the last two push tasks, SIM achieves competitive or even higher rewards, compared to the unconstrained one. Overall PPO-Lagrangian had the second best performance. 

\subsection{SIM vs GAIL, and the Importance of ``Good'' and ``Bad'' Demonstrations}
\begin{figure}[htb]
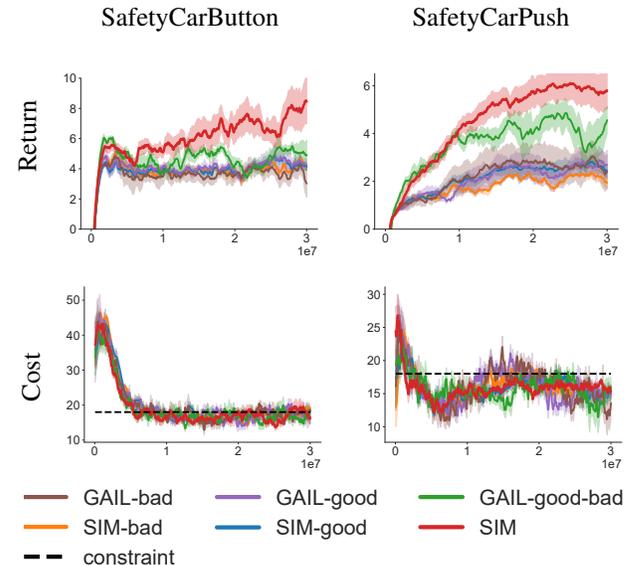

\centering
\rotatebox[origin=c]{90}{\centering Return}
\showReturn[0.21]{0}{0}{0}{0}{Images/SafetyCarButton1-v0/onlyGB_return}{SafetyCarButton}
\showReturn[0.21]{0}{0}{0}{0}{Images/SafetyCarPush1-v0/onlyGB_return}{SafetyCarPush}

\rotatebox[origin=c]{90}{\centering Cost}
\showCost[0.21]{0}{0}{0}{0}{Images/SafetyCarButton1-v0/onlyGB_cost}
\showCost[0.21]{0}{0}{0}{0}{Images/SafetyCarPush1-v0/onlyGB_cost}
\showLegend[0.45]{10}{10}{20}{20}{Images/only_GB_legend_bar}
    \caption{Comparisons with GAIL-based algorithms and other versions of SIM. %Mean value and standard error value of each run in calculated for at least four seeds.
    }
    \label{fig:GB_results}  
\end{figure}

We aim to assess the importance of having both ``good'' and ``bad'' demonstrations in our IL-based approach, as well as to demonstrate the advantages of our non-adversarial method. To 
this end, we compared SIM with three versions of GAIL that use (i) only good demonstrations $\Omega^G$, (ii) only ``bad'' demonstrations $\Omega^B$, and (iii) both ``good'' and ``bad'' demonstrations, but using two discriminators as described in Section \ref{sec:algorithm}. For the sake of comparison, we also include two versions of SIM with only good demonstrations and only bad demonstrations. We do this by just removing the ``good'' (or ``bad'') part from \eqref{eq:DM-GD-eq1}. 
%we implemented GAIL~\cite{ho2016generative}, which solely applies one distribution, thus creating two distinct methods for comparison.
Figure~\ref{fig:GB_results} shows the comparisons on 2 SafetyGym environments, which clearly demonstrates the superior performance of SIM, compared to 
 SIM versions with only good (or bad) demonstrations, and the 3 different GAIL versions, highlighting the importance of having both good and bad trajectories in the training.
 Furthermore, it can be observed that our non-adversarial algorithm is highly stable and consistent in returning high-reward and safe policies.

%On the other hand, the only-good and only-bad versions have to learn in an adversarial scheme. As a result, the discriminator can easily classify their distributions, leading to reduce support received from the discriminator. Consequently, these methods revert back to a normal PPO-Lagrangian approach, hindering their performance compared to SIM.
%In conclusion, our experimental results highlight the importance of considering both good and bad distributions in our approach and show that the SIM method, which combines both, outperforms the methods relying solely on one type of distribution.  

\subsection{SIM vs Behavioral Cloning}\label{sec:experiments - BCGD}
As mentioned earlier, one  can design an IL-based method for constrained RL based on \eqref{BC-good-bad}. In this section, we aim to compare our algorithm with a BC-based approach. We implement two BC algorithms: one is based on \eqref{BC} with one ``good'' demonstration, and another is based on \eqref{BC-good-bad} with both sets.
%Behavioral Cloning (BC) is not directly suitable for our method. However, in this section, we applied our approach to BC, creating BC-GB, and compared it with SIM. 
The comparison results are presented in Figure~\ref{fig:BC_results}. For the two BC algorithms, since their training curves are not comparable with those from SIM,  we only draw horizon lines representing their expected reward and cost at convergence (their training curves are provided in the appendix).
%and the BC training curves in Figure~\ref{fig:BC_train_curves}.
\begin{figure}[htbp]
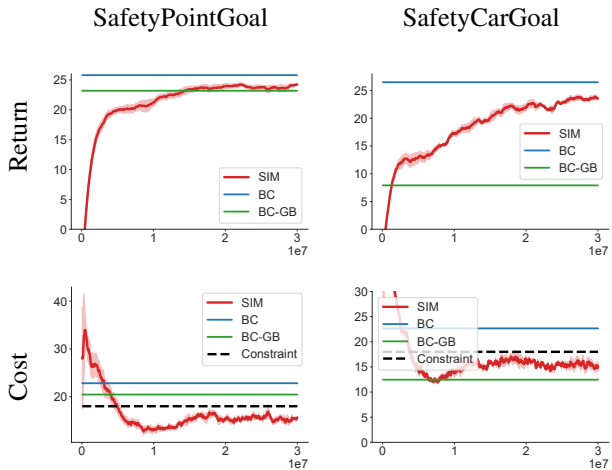

\centering
\rotatebox[origin=c]{90}{\centering Return}
\showReturn[0.21]{0}{0}{0}{0}{Images/SafetyPointGoal1-v0/BC_Return}{SafetyPointGoal}
\showReturn[0.21]{0}{0}{0}{0}{Images/SafetyCarGoal1-v0/BC_Return}{SafetyCarGoal}

\rotatebox[origin=c]{90}{\centering Cost}
\showCost[0.21]{0}{0}{0}{0}{Images/SafetyPointGoal1-v0/BC_cost}
\showCost[0.21]{0}{0}{0}{0}{Images/SafetyCarGoal1-v0/BC_cost}

    \caption{Comparison with BC-based algorithms.}
    \label{fig:BC_results}
\end{figure}

In both environments, the BC achieves the highest expected rewards, but it fails to satisfy the constraint. BC-GB either gives low-reward or unsafe policies. %, or  high returns in the first environment but is also highly unsafe/ up with even higher costs than the original BC method. Additionally, in the next environment, BC-GB couldn't achieve high returns but was successful in satisfying the constraint.
On the other hand, SIM consistently achieves high rewards while satisfying the constraint in all the experiments. %Unfortunately, BC-GB proved to be unstable and impractical due to its inconsistent performance. 

%Overall, our method (SIM) outperformed the BC-GB variant in terms of achieving high returns while adhering to the imposed constraint, making it a more reliable and effective choice in practice.

\subsection{Varying Expertise Level}\label{sec:varying E}
In this section, our goal is to comprehend the influence of the training extent of the pre-trained policy $\pi^0$ on the efficiency of SIM. To this end, we trained the initial policy $\pi^0$ using varying numbers of environmental steps: specifically, 10 million (1e7), 20 million (2e7), and 30 million (3e7) steps, corresponding to what we term ``entry-level'', ``medium-level'' and ``expert-level'', respectively.
 % We also train $\pi^0$ with $c_{max}=28$. 
 The comparison results are shown in Figure~\ref{fig:diff_expert_results}, revealing that both the ``entry-level'' and 
``medium-level'' pre-trained policies achieve lower expected rewards  compared to the ``expert-level'' SIM. Nevertheless, with either the  ``entry-level'' or  
``medium-level'' $\pi^0$, SIM outperforms the original PPO-Lagrangian baseline, which was the second best among all the baselines. 
  %Each initial policy has its own expected return, and we determine $R_G$. The return thresholds for each expert in the SafetyCarButton and SafetyCarPush environments are listed in Table~\ref{tab:Expert_info}.

\begin{figure}[htbp]
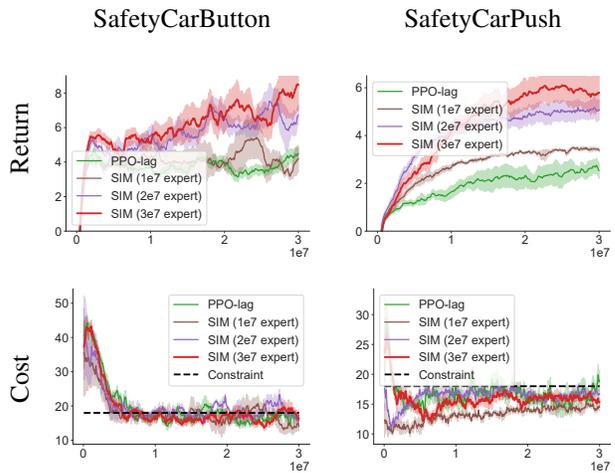

\centering
\rotatebox[origin=c]{90}{\centering Return}
\showReturn[0.21]{0}{0}{0}{0}{Images/SafetyCarButton1-v0/dif_exp_return}{SafetyCarButton}
\showReturn[0.21]{0}{0}{0}{0}{Images/SafetyCarPush1-v0/dif_exp_return}{SafetyCarPush}

\rotatebox[origin=c]{90}{\centering Cost}
\showCost[0.21]{0}{0}{0}{0}{Images/SafetyCarButton1-v0/dif_exp_Cost}
\showCost[0.21]{0}{0}{0}{0}{Images/SafetyCarPush1-v0/dif_exp_Cost}

    \caption{
Comparison results for different expertise levels of the pre-trained policy.}
    \label{fig:diff_expert_results}
\end{figure}

These results demonstrate that even without a well-trained initial policy, SIM is able to efficiently improve it and outperform the traditional PPO-Lagrangian method. 
These also indicate that, for SIM to achieve the best performance, one should start with a well-trained initial policy. As mentioned previously, SIM would greatly benefit from learning from good trajectories generated by a well-trained policy.  

%the importance of having  
%Although the expert with 3e7 environment steps achieved the highest return, the SIM approach proved to be valuable for all the experts, showcasing its robustness and adaptability across different training settings.

\section{CONCLUSION}
We introduced a novel framework to solve Constrained RL without relying on cost estimations or cost penalties, as commonly done in prior work. Our new algorithm, based on the idea of learning to mimic the behavior of good demonstrations and avoid bad demonstrations, is non-adversarial and allows learning from demonstration sets to evolve during the training process. Extensive experiments on several challenging benchmark tasks demonstrate that our approach achieves superior performance compared to prior constrained RL algorithms.
Our IL-based framework would open new directions to address safe RL problems without explicitly considering the reward or cost function. Our algorithm relies on sets of good demonstrations generated by a pre-trained policy, so a limitation would be that our algorithm will not work if it is difficult to generate feasible trajectories due to, for instance, strict constraints. A future direction would be to develop new IL-based algorithms to address such issues.
\section*{Acknowledgment}
This research/project is supported by the National Research Foundation Singapore and DSO National Laboratories under the AI Singapore Programme (AISG Award No: AISG2-RP-2020-016).

\bibliography{aaai24}

\begin{thebibliography}{34}
\providecommand{\natexlab}[1]{#1}

\bibitem[{Abbeel and Ng(2004)}]{abbeel2004apprenticeship}
Abbeel, P.; and Ng, A.~Y. 2004.
\newblock Apprenticeship learning via inverse reinforcement learning.
\newblock In \emph{Proceedings of the twenty-first international conference on Machine learning}, 1.

\bibitem[{Achiam et~al.(2017)Achiam, Held, Tamar, and Abbeel}]{achiam2017constrained}
Achiam, J.; Held, D.; Tamar, A.; and Abbeel, P. 2017.
\newblock Constrained policy optimization.
\newblock In \emph{International conference on machine learning}, 22--31. PMLR.

\bibitem[{Altman(1999)}]{altman1999constrained}
Altman, E. 1999.
\newblock \emph{Constrained Markov decision processes}, volume~7.
\newblock CRC press.

\bibitem[{Bromley et~al.(1993)Bromley, Guyon, LeCun, S{\"a}ckinger, and Shah}]{bromley1993signature}
Bromley, J.; Guyon, I.; LeCun, Y.; S{\"a}ckinger, E.; and Shah, R. 1993.
\newblock Signature verification using a" siamese" time delay neural network.
\newblock \emph{Advances in neural information processing systems}, 6.

\bibitem[{Chow et~al.(2019)Chow, Nachum, Faust, Duenez-Guzman, and Ghavamzadeh}]{chow2019lyapunov}
Chow, Y.; Nachum, O.; Faust, A.; Duenez-Guzman, E.; and Ghavamzadeh, M. 2019.
\newblock Lyapunov-based safe policy optimization for continuous control.
\newblock \emph{arXiv preprint arXiv:1901.10031}.

\bibitem[{Clark et~al.(2020)Clark, Luong, Le, and Manning}]{clark2020electra}
Clark, K.; Luong, M.-T.; Le, Q.~V.; and Manning, C.~D. 2020.
\newblock Electra: Pre-training text encoders as discriminators rather than generators.
\newblock \emph{arXiv preprint arXiv:2003.10555}.

\bibitem[{Dadashi et~al.(2020)Dadashi, Hussenot, Geist, and Pietquin}]{dadashi2020primal}
Dadashi, R.; Hussenot, L.; Geist, M.; and Pietquin, O. 2020.
\newblock Primal wasserstein imitation learning.
\newblock \emph{arXiv preprint arXiv:2006.04678}.

\bibitem[{Firoiu, Whitney, and Tenenbaum(2017)}]{firoiu2017beating}
Firoiu, V.; Whitney, W.~F.; and Tenenbaum, J.~B. 2017.
\newblock Beating the world's best at Super Smash Bros. with deep reinforcement learning.
\newblock \emph{arXiv preprint arXiv:1702.06230}.

\bibitem[{Fu et~al.(2021)Fu, Tang, Hao, Chen, Feng, Li, and Liu}]{fu2021towards}
Fu, H.; Tang, H.; Hao, J.; Chen, C.; Feng, X.; Li, D.; and Liu, W. 2021.
\newblock Towards effective context for meta-reinforcement learning: an approach based on contrastive learning.
\newblock In \emph{Proceedings of the AAAI Conference on Artificial Intelligence}, volume~35, 7457--7465.

\bibitem[{Fu, Luo, and Levine(2017)}]{fu2017learning}
Fu, J.; Luo, K.; and Levine, S. 2017.
\newblock Learning robust rewards with adversarial inverse reinforcement learning.
\newblock \emph{arXiv preprint arXiv:1710.11248}.

\bibitem[{Gao, Yao, and Chen(2021)}]{gao2021simcse}
Gao, T.; Yao, X.; and Chen, D. 2021.
\newblock Simcse: Simple contrastive learning of sentence embeddings.
\newblock \emph{arXiv preprint arXiv:2104.08821}.

\bibitem[{Garg et~al.(2021)Garg, Chakraborty, Cundy, Song, and Ermon}]{garg2021iq}
Garg, D.; Chakraborty, S.; Cundy, C.; Song, J.; and Ermon, S. 2021.
\newblock Iq-learn: Inverse soft-q learning for imitation.
\newblock \emph{Advances in Neural Information Processing Systems}, 34: 4028--4039.

\bibitem[{Grill et~al.(2020)Grill, Strub, Altch{\'e}, Tallec, Richemond, Buchatskaya, Doersch, Avila~Pires, Guo, Gheshlaghi~Azar et~al.}]{grill2020bootstrap}
Grill, J.-B.; Strub, F.; Altch{\'e}, F.; Tallec, C.; Richemond, P.; Buchatskaya, E.; Doersch, C.; Avila~Pires, B.; Guo, Z.; Gheshlaghi~Azar, M.; et~al. 2020.
\newblock Bootstrap your own latent-a new approach to self-supervised learning.
\newblock \emph{Advances in neural information processing systems}, 33: 21271--21284.

\bibitem[{He et~al.(2019)He, Fan, Wu, Xie, and Girshick}]{he2019moco}
He, K.; Fan, H.; Wu, Y.; Xie, S.; and Girshick, R. 2019.
\newblock Momentum Contrast for Unsupervised Visual Representation Learning.
\newblock \emph{arXiv preprint arXiv:1911.05722}.

\bibitem[{Ho and Ermon(2016)}]{ho2016generative}
Ho, J.; and Ermon, S. 2016.
\newblock Generative adversarial imitation learning.
\newblock \emph{Advances in neural information processing systems}, 29.

\bibitem[{Hoang, Dinh, and Nguyen(2023)}]{hoang2023learning}
Hoang, M.-H.; Dinh, L.; and Nguyen, H. 2023.
\newblock Learning from Pixels with Expert Observations.
\newblock \emph{arXiv preprint arXiv:2306.13872}.

\bibitem[{Ji et~al.(2023)Ji, Zhang, Pan, Zhou, Dai, and Yang}]{Safety-Gymnasium}
Ji, J.; Zhang, B.; Pan, X.; Zhou, J.; Dai, J.; and Yang, Y. 2023.
\newblock Safety-Gymnasium.
\newblock \emph{GitHub repository}.

\bibitem[{Kilinc and Montana(2022)}]{kilinc2022reinforcement}
Kilinc, O.; and Montana, G. 2022.
\newblock Reinforcement learning for robotic manipulation using simulated locomotion demonstrations.
\newblock \emph{Machine Learning}, 1--22.

\bibitem[{Kostrikov, Nachum, and Tompson(2019)}]{kostrikov2019imitation}
Kostrikov, I.; Nachum, O.; and Tompson, J. 2019.
\newblock Imitation learning via off-policy distribution matching.
\newblock \emph{arXiv preprint arXiv:1912.05032}.

\bibitem[{Laskin, Srinivas, and Abbeel(2020)}]{laskin2020curl}
Laskin, M.; Srinivas, A.; and Abbeel, P. 2020.
\newblock Curl: Contrastive unsupervised representations for reinforcement learning.
\newblock In \emph{International Conference on Machine Learning}, 5639--5650. PMLR.

\bibitem[{Mnih et~al.(2016)Mnih, Badia, Mirza, Graves, Lillicrap, Harley, Silver, and Kavukcuoglu}]{mnih2016asynchronous}
Mnih, V.; Badia, A.~P.; Mirza, M.; Graves, A.; Lillicrap, T.; Harley, T.; Silver, D.; and Kavukcuoglu, K. 2016.
\newblock Asynchronous methods for deep reinforcement learning.
\newblock In \emph{International conference on machine learning}, 1928--1937. PMLR.

\bibitem[{Ng, Russell et~al.(2000)}]{ng2000algorithms}
Ng, A.~Y.; Russell, S.; et~al. 2000.
\newblock Algorithms for inverse reinforcement learning.
\newblock In \emph{Icml}, volume~1, 2.

\bibitem[{Noroozi and Favaro(2016)}]{noroozi2016unsupervised}
Noroozi, M.; and Favaro, P. 2016.
\newblock Unsupervised learning of visual representations by solving jigsaw puzzles.
\newblock In \emph{European conference on computer vision}, 69--84. Springer.

\bibitem[{Raghu et~al.(2017)Raghu, Komorowski, Ahmed, Celi, Szolovits, and Ghassemi}]{raghu2017deep}
Raghu, A.; Komorowski, M.; Ahmed, I.; Celi, L.; Szolovits, P.; and Ghassemi, M. 2017.
\newblock Deep reinforcement learning for sepsis treatment.
\newblock \emph{arXiv preprint arXiv:1711.09602}.

\bibitem[{Ray, Achiam, and Amodei(2019)}]{ray2019benchmarking}
Ray, A.; Achiam, J.; and Amodei, D. 2019.
\newblock Benchmarking safe exploration in deep reinforcement learning.
\newblock \emph{arXiv preprint arXiv:1910.01708}, 7(1): 2.

\bibitem[{Ross, Gordon, and Bagnell(2011)}]{ross2011reduction}
Ross, S.; Gordon, G.; and Bagnell, D. 2011.
\newblock A reduction of imitation learning and structured prediction to no-regret online learning.
\newblock In \emph{Proceedings of the fourteenth international conference on artificial intelligence and statistics}, 627--635. JMLR Workshop and Conference Proceedings.

\bibitem[{Satija, Amortila, and Pineau(2020)}]{satija2020constrained}
Satija, H.; Amortila, P.; and Pineau, J. 2020.
\newblock Constrained markov decision processes via backward value functions.
\newblock In \emph{International Conference on Machine Learning}, 8502--8511. PMLR.

\bibitem[{Schulman et~al.(2017)Schulman, Wolski, Dhariwal, Radford, and Klimov}]{schulman2017proximal}
Schulman, J.; Wolski, F.; Dhariwal, P.; Radford, A.; and Klimov, O. 2017.
\newblock Proximal policy optimization algorithms.
\newblock \emph{arXiv preprint arXiv:1707.06347}.

\bibitem[{Weng et~al.(2017)Weng, Gao, He, Yan, and Szolovits}]{weng2017representation}
Weng, W.-H.; Gao, M.; He, Z.; Yan, S.; and Szolovits, P. 2017.
\newblock Representation and reinforcement learning for personalized glycemic control in septic patients.
\newblock \emph{arXiv preprint arXiv:1712.00654}.

\bibitem[{Xie et~al.(2021)Xie, Liu, Zhang, Lu, Wang, and Ding}]{xie2021adversarial}
Xie, Z.; Liu, C.; Zhang, Y.; Lu, H.; Wang, D.; and Ding, Y. 2021.
\newblock Adversarial and contrastive variational autoencoder for sequential recommendation.
\newblock In \emph{Proceedings of the Web Conference 2021}, 449--459.

\bibitem[{Yang et~al.(2022)Yang, Ji, Dai, Zhang, Zhou, Li, Yang, and Pan}]{yang2022constrained}
Yang, L.; Ji, J.; Dai, J.; Zhang, L.; Zhou, B.; Li, P.; Yang, Y.; and Pan, G. 2022.
\newblock Constrained update projection approach to safe policy optimization.
\newblock \emph{Advances in Neural Information Processing Systems}, 35: 9111--9124.

\bibitem[{Yang et~al.(2021)Yang, Sim{\~a}o, Tindemans, and Spaan}]{yang2021wcsac}
Yang, Q.; Sim{\~a}o, T.~D.; Tindemans, S.~H.; and Spaan, M.~T. 2021.
\newblock WCSAC: Worst-case soft actor critic for safety-constrained reinforcement learning.
\newblock In \emph{Proceedings of the AAAI Conference on Artificial Intelligence}, volume~35, 10639--10646.

\bibitem[{Zhang, Vuong, and Ross(2020)}]{zhang2020first}
Zhang, Y.; Vuong, Q.; and Ross, K. 2020.
\newblock First order constrained optimization in policy space.
\newblock \emph{Advances in Neural Information Processing Systems}, 33: 15338--15349.

\bibitem[{Zhou et~al.(2021)Zhou, Ma, Zhang, Zhou, and Yang}]{zhou2021contrastive}
Zhou, C.; Ma, J.; Zhang, J.; Zhou, J.; and Yang, H. 2021.
\newblock Contrastive learning for debiased candidate generation in large-scale recommender systems.
\newblock In \emph{Proceedings of the 27th ACM SIGKDD Conference on Knowledge Discovery \& Data Mining}, 3985--3995.

\end{thebibliography}

\onecolumn
\appendix
\section{Missing Proofs}

\subsection{Proof of Lemma \ref{lm-1}}
\textbf{Lemma \ref{lm-1}:}
\textit{ For any $\lambda>0$, if there exists a policy $\pi^*$ such that $P_{\pi^*}(\tau) = 0$ for all $\tau \in \Omega^B$, and $$
     P_{\pi^*}(\tau) = \frac{P_{\pi^0}(\tau)}{  \sum_{\tau'\in \Omega^G}P_{\pi^0}(\tau')};~\forall \tau \in \Omega^G 
     $$ 
     then $\pi^*$ is an optimal policy to \eqref{BC-good-bad}.
}
\begin{proof}
To simplify the proof, let us first prove the following result:
\begin{lemma}\label{lm:lm2}
   Given $\widehat{p}_0,\widehat{p}_1,\ldots,\widehat{p}_N \in [0,1]$ such that $\sum_{n=1}^N \widehat{p}_n \leq 1$, then vector $p^*$ such that $p^*_n = \frac{\widehat{p}_n}{\sum_{n'}\widehat{p}_{n'}}$ is a unique optimal solution to  the  following optimization problem \begin{equation}\label{prob:sub-p}
      \max_{p\in[0,1]^N} \left\{f(p) = \sum_{n} \widehat{p}_n \ln {p}_n\Big|~ \sum_n p_n\leq 1  \right\} 
   \end{equation}   
\end{lemma}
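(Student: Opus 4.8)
The plan is to prove both optimality and uniqueness directly through a Gibbs-type (cross-entropy / log-sum) inequality rather than invoking KKT machinery, since the direct argument disposes of the boundary behavior and the uniqueness claim in a single chain of inequalities. First I would set $S = \sum_{n'} \widehat{p}_{n'}$ and observe that we may assume $S>0$ (if $S=0$ then every $\widehat{p}_n=0$, the objective is identically zero under the convention $0\ln 0 = 0$, and the statement is degenerate). This assumption is exactly what makes $p^*_n = \widehat{p}_n/S$ well defined, and $p^*$ is feasible: each $p^*_n \ge 0$, and since $\widehat{p}_n \le S$ we have $p^*_n \le 1$, while $\sum_n p^*_n = S/S = 1 \le 1$.

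Next, for an arbitrary feasible $p$ I would compare $f(p)$ with $f(p^*)$. If $p_n=0$ for some $n$ with $\widehat{p}_n>0$ then $f(p)=-\infty$ and there is nothing to prove, so assume $p_n>0$ whenever $\widehat{p}_n>0$. Then the terms with $\widehat{p}_n=0$ vanish and
$$f(p)-f(p^*) = \sum_{n:\,\widehat{p}_n>0} \widehat{p}_n \ln\frac{p_n}{p^*_n}.$$
Applying the elementary inequality $\ln x \le x-1$ term by term and substituting $\widehat{p}_n/p^*_n = S$ turns the right-hand side into $\sum_{n:\,\widehat{p}_n>0}(S p_n - \widehat{p}_n) = S\big(\sum_{n:\,\widehat{p}_n>0} p_n\big) - S$. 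Finally, bounding $\sum_{n:\,\widehat{p}_n>0} p_n \le \sum_n p_n \le 1$ yields $f(p)-f(p^*) \le S\cdot 1 - S = 0$, establishing that $p^*$ is optimal.

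For uniqueness I would track the two places where equality can occur. Equality in $\ln x \le x-1$ forces $p_n = p^*_n$ for every $n$ with $\widehat{p}_n>0$, and equality in the budget bound forces both $\sum_{n:\,\widehat{p}_n>0} p_n = 1$ and $p_n=0$ for every $n$ with $\widehat{p}_n=0$. Since the first condition already gives $\sum_{n:\,\widehat{p}_n>0} p^*_n = \sum_n \widehat{p}_n/S = 1$, the two conditions are consistent and together pin down $p=p^*$ exactly. I expect the main obstacle to be purely bookkeeping: correctly isolating the coordinates with $\widehat{p}_n=0$ (which contribute nothing to the objective yet still consume the budget) and handling the $-\infty$ cases, so that one chain of inequalities simultaneously yields optimality and strict uniqueness of the maximizer. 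An alternative route via the Lagrangian, whose stationarity condition $\widehat{p}_n/p_n=\mu$ forces $\mu=S$, combined with concavity of $f$, would also work, but it requires a separate treatment of the boundary and of strictness, so I would favor the direct inequality.
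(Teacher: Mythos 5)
Your proof is correct, and it takes a genuinely different route from the paper's. The paper proves this lemma by writing the Lagrangian $\cL(p,\eta)=\sum_n \widehat{p}_n\ln p_n-\eta(\sum_n p_n-1)$, invoking KKT stationarity to get $\widehat{p}_n/\overline{p}_n=\eta$ for all $n$, deducing $\eta=\sum_n\widehat{p}_n>0$ and hence that the budget constraint is tight, and appealing to strict concavity of $f$ on $(0,1)^N$ for uniqueness. You instead establish $f(p)-f(p^*)\le 0$ in one chain via the Gibbs-type inequality $\ln x\le x-1$, with uniqueness read off from the equality cases. Your route buys several things the paper's argument glosses over: it needs no differentiability or interiority assumptions, it handles the boundary ($p_n=0$ giving $f(p)=-\infty$) and the coordinates with $\widehat{p}_n=0$ explicitly, and it does not rely on strict concavity of $f$ --- which in fact fails on $(0,1)^N$ whenever some $\widehat{p}_n=0$, so the paper's uniqueness claim as written is slightly too quick, whereas your equality analysis (the positive-weight coordinates are pinned by $\ln x\le x-1$, the zero-weight ones by the exhausted budget) closes that gap. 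The Lagrangian route is shorter to state and generalizes mechanically to other constraint sets, but for this particular problem your direct inequality is the cleaner and more airtight argument. One cosmetic remark: you should state the convention $0\ln 0=0$ (or simply restrict the sum defining $f$ to indices with $\widehat{p}_n>0$) at the outset, since the lemma's objective is otherwise undefined at points where $\widehat{p}_n>0$ is paired with $p_n=0$ only via the extended value $-\infty$.
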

\begin{proof}
We first see that the objective function $f(p)$ is strictly concave in $(0,1)^N$, implying that \eqref{prob:sub-p} always has a unique optimal solution. 
We write the Lagrange dual of \eqref{prob:sub-p} as
    \[
    \cL(p,\eta) = \sum_{n} \widehat{p}_n \ln {p}_n - \eta\left(\sum_{n} p_n- 1\right)
    \]
Let $\overline{p}$ be the optimal solution of \eqref{prob:sub-p} and $\overline{\eta}$ be its associated Lagrange multiplier. The KKT conditions imply that the following hold
\begin{equation*}
    \begin{cases}
        \frac{\partial \cL(\overline{p},\eta)}{\partial p_n} = 0,~\forall n=1,\ldots,N \\
        \eta(\sum_n \overline{p}_n) = 0
    \end{cases}   
\end{equation*}
which is equivalent to 
\[
\begin{cases}
    \frac{\widehat{p}_1}{\overline{p}_1} = \frac{\widehat{p}_2}{\overline{p}_2} = ... = \frac{\widehat{p}_N}{\overline{p}_N} = \eta \\
    \eta(\sum_n \overline{p}_n-1) =0
\end{cases}
\]
We than see that $\eta >0$, thus $\sum_n \overline{p}_n-1 = 0$. On the other hand $\eta = \frac{\sum_{n} \widehat{p}_n}{\sum_n \overline{p}_n} = \sum_{n} \widehat{p}_n,$
which implies that 
$\overline{p}_n = \frac{\widehat{p}_n}{\sum_{n'}\widehat{p}_{n'}}$. Thus $p^* = \overline{p}$ is a unique optimal solution to \eqref{prob:sub-p}, as desired. 
\end{proof}

We now get back to the main proof. Recall that the objective function of the training with good and bad trajectories, under BC, is 
\[   
 F(\pi) = \lambda \sum_{\substack{ \tau \in \Omega^G}} P_{\pi^0}(\tau) \ln P_{\pi}(\tau)  - (1-\lambda) \sum_{\substack{ \tau \in \Omega^B}} P_{\pi^0}(\tau) \phi(\ln P_{\pi}(\tau)) 
\]  
We now assume  that the regularizer $\phi(.)$ map $(-\infty,0]$ to a finite interval $[a,b]$. Since $\phi()$ is monotone, we see that, for any $\tau \in \Omega$, $P_{\pi}(\tau)\geq P_{\pi^*}(\tau)$, thus $\phi(P_{\pi}(\tau))\geq \phi(P_{\pi^*}(\tau))$. Moreover, from Lemma \ref{lm:lm2}, the first term of $F(\tau)$ can be bounded as 
\[
\sum_{\substack{ \tau \in \Omega^G}} P_{\pi^0}(\tau) \ln P_{\pi}(\tau) \leq \sum_{\substack{ \tau \in \Omega^G}} P_{\pi^0}(\tau) \ln P_{\pi^*}(\tau)  
\]
implying $F(\pi)\leq F(\pi^*)$ for any policy $\pi$. So, if $\pi^*$ exists, it will be optimal to \eqref{BC-good-bad}. 
\end{proof}

\subsection{Proof of Proposition \ref{th-1}}
\textbf{Proposition \ref{th-1}: }
\textit{    For any $\lambda>0$, let $\pi^0$ be a pre-trained  feasible policy, $\bR^E = \bbE_{\tau\sim\pi^0}\Big[R(\tau)\Big]$, and  $\Omega^B$ be a collection  of trajectories of low reward and high-cost values $$\Omega^B = \left\{\tau\Big|~ R(\tau) \leq \bR^E ,~ C(\tau) >c_{\max}\}\Big]\right\}$$
the  optimal policy mentioned in Lemma \ref{lm-1} is feasible to the cost constraint while offering a better expected reward than the pre-trained policy $\pi^0$, specifically, 
{\small\begin{align}
    \bbE_{\pi^*} \left[R(\tau)\right] - \bbE_{\pi^0} \left[R(\tau)\right] &=\frac{\sum_{\tau}{P_{\pi^0}(\tau) (\bR^E-R(\tau))}}{1-P_{\pi^0}(\Omega^B)} \geq 0 \label{eq:p1-eq11} \\
    \bbE_{\tau \sim \pi^*} \left[C(\tau)\right] &\leq c_{\max}\nonumber
\end{align}}
where $P_{\pi^0}(\Omega^B) = \sum_{\tau\in \Omega^B}P_{\pi^0}(\tau)$.}
\begin{proof}
    Recall that $P_{\pi^*}(\tau) = 0$ for all $\tau \in\Omega^B$  and $P_{\pi^*}(\tau) = \frac{P_{\pi^0}(\tau)}{\sum_{\tau'} P_{\pi^0}(\tau)}$ for all $\tau \in\Omega^G$. We write the expected reward under $\pi^*$ as 
\begin{align*}
   \bbE_{\tau\sim \pi^*}[R(\tau)] &=  \sum_{\tau\in \Omega^G} P_{\pi^*}(\tau) R(\tau) = \frac{\sum_{\tau\in \Omega^G} 
 P_{\pi^0}(\tau)}{1 - P_{\pi^0}(\Omega^B)}
\end{align*}
 Thus
 \begin{align*}
   \bbE_{\tau\sim \pi^*}[R(\tau)] - \bbE_{\tau\sim \pi^0}[R(\tau)] &=  \sum_{\tau\in \Omega^G} P_{\pi^*}(\tau) R(\tau) = \frac{\sum_{\tau\in \Omega^G} 
 P_{\pi^0}(\tau)R(\tau) - \sum_{\tau} P_{\pi^0} R(\tau) (1-P_{\pi^0}(\Omega^B))}{1 - P_{\pi^0}(\Omega^B)} \\
 &=\frac{\sum_{\tau\in \Omega^G} 
 P_{\pi^0}(\tau)R(\tau) - \sum_{\tau} P_{\pi^0} R(\tau) +P_{\pi^0}(\Omega^B) \bR^E }{1 - P_{\pi^0}(\Omega^B)} \\
 &=\frac{\sum_{\tau\in \Omega^B} 
 P_{\pi^0}(\tau)(\bR^E - R(\tau))  }{1 - P_{\pi^0}(\Omega^B)} \stackrel{(a)}{\geq} 0 \\
\end{align*}
where $(a)$ is due to the fact that  $R(\tau)\leq \bR^E$ for all $\tau \in \Omega^B$. We now consider the expected cost given by $\pi^*$. Let $\bC^E = \bbE_{\pi^0}[C(\tau)]$, we write 
\begin{align*}
    \bbE_{\pi^*}[C(\tau)] - \bC^E &= \sum_{\tau} P_{\pi^*}(\tau) C(\tau) - \bC^E  \nonumber\\
    &= \frac{\sum_{\tau\in \Omega^G} 
 P_{\pi^0}(\tau)C(\tau) -  \sum_{\tau} 
 P_{\pi^0}(\tau)C(\tau) + \bC^E P(\Omega^B)}{1 - P_{\pi^0}(\Omega^B)} \\
 &=\frac{ \bC^E P(\Omega^B) - \sum_{\tau} 
 P_{\pi^0}(\tau)C(\tau)}{1 - P_{\pi^0}(\Omega^B)} \\
 &\stackrel{(b)}{\leq}\frac{ \bC^E P(\Omega^B) - \sum_{\tau\in \Omega^B} 
 P_{\pi^0}(\tau)c_{max}}{1 - P_{\pi^0}(\Omega^B)} = \frac{ (\bC^E - c_{max}) P(\Omega^B) }{1 - P_{\pi^0}(\Omega^B)} \stackrel{(c)}{\leq} 0\\
\end{align*}
where $(b)$ is because $C(\tau) >c_{max}$ (according to the way we choose $\Omega^B$), and $\bc^E\leq c_{max}$ $(\pi^0$ is feasible w.r.t the cost constraint). So, we have $ \bbE_{\pi^*}[C(\tau)]\leq \bC^E\leq c_{max}$, implying that $\pi^*$ is safe, as desired. 
\end{proof}

\subsection{Proof of Lemma \ref{lm-2}}
\textbf{Lemma \ref{lm-2}: }
\textit{If $\lambda = 0$, any policy $\pi^*$ such that $P_{\pi^*}(\tau) = 0$ for all $\tau \in\Omega^B$ is optimal for \eqref{BC-good-bad}.  }
\begin{proof}
   This can be obviously seen, as if $\lambda = 0$, then the objective function becomes  $F(\pi) = -  \sum_{\substack{ \tau \in \Omega^B}} P_{\pi^0}(\tau) \phi(\ln P_{\pi}(\tau)) $. Since $\phi(.)$ is monotone, $F(\pi) \geq F(\pi^*)$ for any policy $\pi$. 
\end{proof}

\subsection{Proof of Proposition \ref{th-2}}
\textbf{Proposition \ref{th-2}: }
\textit{If $\lambda = 0$, and the bad set $\Omega^B$ is selected in the same manner as in Theorem \ref{th-1}, then the optimal policy $\pi^*$ from Lemma \eqref{lm-2} is feasible, but it does not necessarily provide a higher expected reward than the policy $\pi^0$.}

\begin{proof}
  \begin{figure}[htb] 
\centering
    \includegraphics[width=0.3\linewidth]{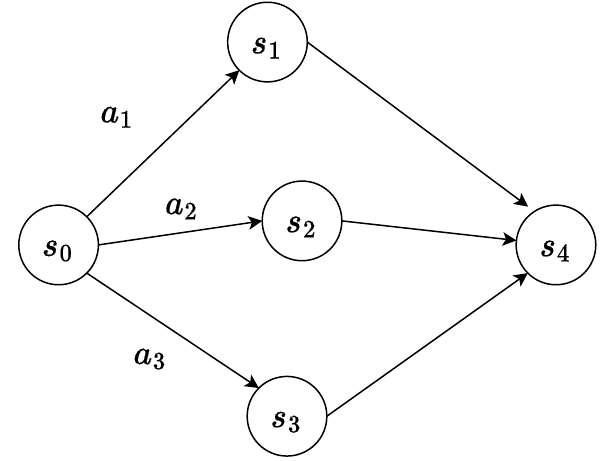} 
    \caption{Example} 
    \label{fig:SMDP2} 
\end{figure}
    According to the  Lemma \ref{lm-2}, any $\pi^*$ such that $P_{\pi^*}(\tau) = 0$ is optimal for \eqref{BC-good-bad}. To prove that $\pi^*$ may not offer a higher expected reward than $\pi^0$, we will use the  counter-example shown in Figure \ref{fig:SMDP2}. There are 5 states and the MDP is deterministic. The rewards  and cost are set as  $r(s_0) = 0,r(s_4) = 0, r(s_1) = 2, r(s_2) = 3,r(s_3) = 8$, and $d(s_0)=0, d(s_4) = 0$, $d(s_1) = 5, d(s_2) = d(s_3) = 1$. The initial policy  is set as $\pi^0(a_1|s_0) = 1/5$, $\pi^0(a_2|s_0)=1/5$ and $\pi^0(a_3|s_0) = 3/5$. We also choose $c_{max} = 2$
    The expected reward is $\bR^E = 5.6$ and and expected cost is $\cC^E = 1.8$. It is then clear that  the trajectory $\{s_0,s_1,s_4\}$ should be classified in the bad set.  Policy $\pi^*$ such that $\pi^*(a_1|s_0) = 0$, $\pi^*(a_2|s_0) = 4/5$ and $\pi^*(a_3|s_0) = 1/5$ is definitely optimal for \eqref{BC-good-bad}, according to Lemma \ref{lm-2}. We however see that $\bbE_{\pi^*}[R(\tau)] = 4 < \bR^E$, implying that $\pi^*$ offers a worse expected reward than the initial policy $\pi^0$. We complete the proof.  
\end{proof}

\subsection{Proof of Proposition \ref{th-3}}

\textbf{Proposition \ref{th-3}:}
\textit{The following hold
    \begin{itemize}
        \item[(i)] If we select the bad set as $\Omega^B = \left\{\tau\Big|~ R(\tau) \leq \bR^E ,~ C(\tau) >\bC^E\}\Big]\right\}$, then it is guaranteed that  $\pi^*$ offers a higher (or equal) expected reward and lower (or equal) expected cost, compared to those from $\pi^0$, where  $\bC^E = \bbE_{\tau\sim \pi^0} [C(\tau)]$ 
        \item[(ii)] If the pre-trained policy $\pi^0$ is not feasible, then if we select the bad set as $\Omega^B = \left\{\tau\Big| C(\tau) >c_{\max}\}\Big]\right\}$, then it is guaranteed that  $\pi^*$ is feasible 
        \item[(iii)] If the cost function is not accessible, but there is an oracle that can tell us which trajectories are violating the constraint,  then by selecting, $\Omega^B = \left\{\tau\Big| \tau \text{ is violated }\Big]\right\}$, then $\pi^*$ is feasible.
    \end{itemize}}

\begin{proof}
The proof is similar to the proof of Proposition \ref{th-1}. For \textit{(i)}, we also write 
\begin{align*}
 \bbE_{\pi^*}[R(\tau)] - \bR^E &=\frac{\sum_{\tau\in \Omega^B} 
 P_{\pi^0}(\tau)(\bR^E - R(\tau))  }{1 - P_{\pi^0}(\Omega^B)} \\
 \bbE_{\pi^*}[C(\tau)] - \bC^E &= \frac{ \sum_{\tau\in \Omega^B}(\bC^E - C(\tau)) P_{\pi^0}(\tau) }{1 - P_{\pi^0}(\Omega^B)}
\end{align*}
Then according to the way we select $\Omega^B$ in (i), we should have $ \bbE_{\pi^*}[R(\tau)] \geq \bR^E$ and $\bbE_{\pi^*}[C(\tau)] \leq \bC^E$, implying that $\pi^*$ yields a higher expected reward and lower expected cost, compared to $\pi^0$. 

For (ii), since $C(\tau)>c_{max}$ for all $\tau \in \Omega^B$, $C(\tau)\leq c_{max}$ for all $\tau \in \Omega^G$. We
 write the expected cost under $\pi^*$ as 
\begin{align*}
     \bbE_{\pi^*}[C(\tau)] &= \frac{\sum_{\tau\in \Omega^G} 
 P_{\pi^0}(\tau)C(\tau)}{1 - P_{\pi^0}(\Omega^B)} \leq \frac{\sum_{\tau\in \Omega^G} 
 P_{\pi^0}(\tau)c_{max}}{1 - P_{\pi^0}(\Omega^B)} = c_{max} 
\end{align*}
So, $\pi^*$ is safe.

Claim \textit{(iii)} is the same as \textit{(ii)}, in the sense that the oracle can correctly select the bad set $\Omega^B = \{\tau|~ C(\tau)>c_{max}\}$. Thus, the policy $\pi^*$, if exits, will be safe. 
\end{proof}

\subsection{Proof of Proposition \ref{prop:DM}}
\textbf{Proposition \ref{prop:DM}:}\textit{
    The maximization in \eqref{eq:DM-GD-eq1} is achieved at $K^*(s,a)$ such that  
    \[
    \ln \left(\frac{K^*(a,s)}{1-K^*(s,a)}\right) = \ln\frac{\rho^{B}(s,a)}{\rho^{G,\pi}(s,a)}
    \]}

\begin{proof}
We first look at the trainning objective  of $K(s,a)$ and write
\begin{align}
    J(K,\pi)&=\bbE_{\rho^B}[\ln(K(s,a))]+\frac{1}{2}\bbE_{\rho^{\pi}}[\ln(1-K(s,a))]+\frac{1}{2}\bbE_{\rho^{G}}[\ln(1-K(s,a))] \nonumber\\
    &=\bbE_{\rho^B}[\ln(K(s,a))] + \sum_{(s,a)} \ln(K(s,a)) \frac{\rho^\pi(s,a) +\rho^G(s,a)}{2} \nonumber \\
    &=\bbE_{\rho^B}[\ln(K(s,a))]+\bbE_{\rho^{\pi,G}}[\ln(1-K(s,a))]\nonumber\\
    &=\sum_{(s,a)} \ln(K(s,a)) \rho^B(s,a)  + \ln(1-K(s,a)) \rho^{\pi,G}(s,a) 
\end{align}
So, to maximize $J(K,\pi)$, each component $\ln(K(s,a)) \rho^B(s,a)  + \ln(1-K(s,a)) \rho^{\pi,G}(s,a) $ needs to be maximized. To study this maximization problem,  we  consider the following simple optimization problem $\max_{x\in (0,1)}\{f(x) = \ln(x) a + \ln (1-x) b\}$, where $a,b \geq 0$. We first see that $f'(x) = \frac{a}{x} - \frac{b}{1-x}$. Thus if we set $f'(x) = 0$, this equation has a unique solution as $x^* = \frac{a}{a+b}$. Moreover $f'(x)\leq 0$ if $x\leq x^*$  and $f'(x)\geq 0$ if $x\geq x^*$, thus $x^*$ is a unique solution to $\max_{x\in (0,1)}\{f(x) = \ln(x) a + \ln (1-x) b\}$.

We now get back to the maximization 
\begin{equation}\label{eq:eq1}
\max_{K} \left\{\ln(K(s,a)) \rho^B(s,a)  + \ln(1-K(s,a)) \rho^{\pi,G}(s,a) \right\}    
\end{equation}
From the above small problem, we know that \eqref{eq:eq1} has a unique optimization solution $K^*(s,a)$ such that 
\[
K^*(s,a) = \frac{\rho^B(s,a)}{\rho^B(s,a) + \rho^{\pi,G}(s,a)}
\]
implying
\[
\frac{K^*(s,a)}{1-K^*(s,a)} = \frac{\rho^B(s,a)}{\rho^{\pi,G}(s,a)}.  
\]
as desired. 
\end{proof}

\section{Additional Details}
\subsection{Method Overview}
In Figure \ref{fig:method_overview} we show a diagram  illustrating  in detail our algorithm SIM. 
\begin{figure}[htbp] 
\centering
    \includegraphics[width=1.0\linewidth]{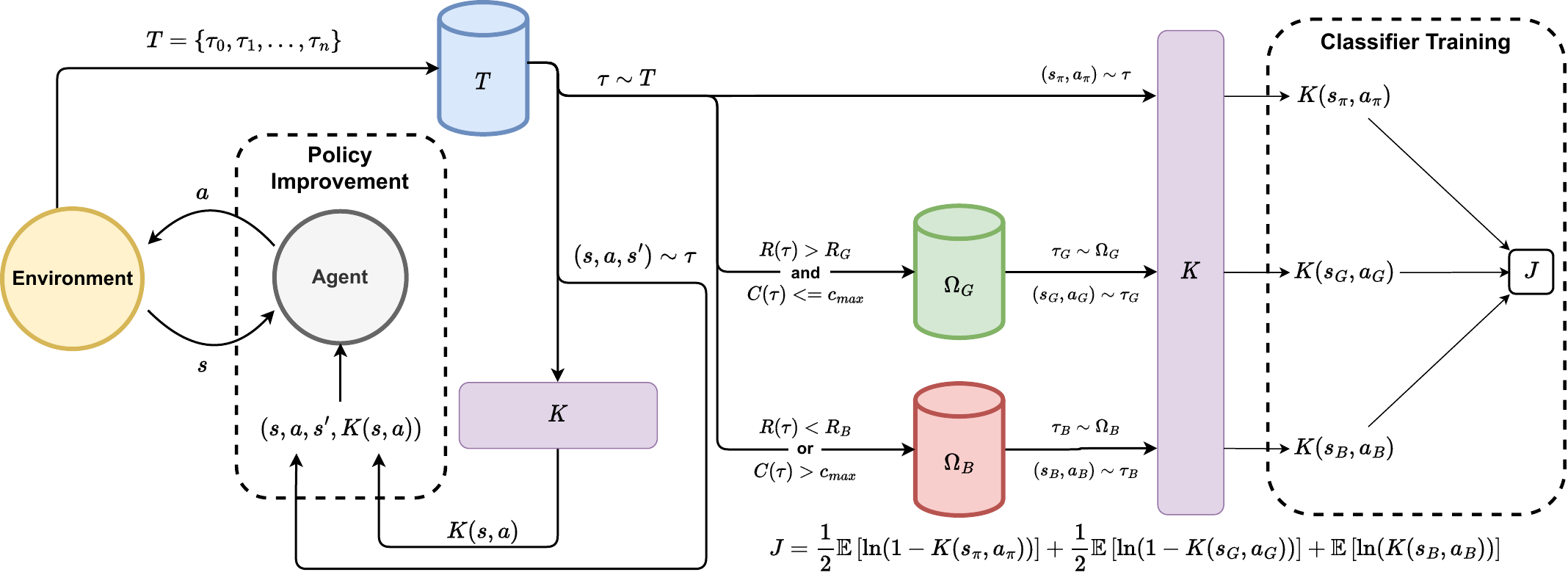} 
    \caption{Overview of SIM} 
    \label{fig:method_overview} 
\end{figure}

%For each experiment, our approach utilizes the relaxed expert for training, the $R_G$ is selected by the mean return of the relaxed expert. Moreover, from the selected $R_G$, we trim $R_B = min(R_B,\max(R_G/2,R_G-5.0))$ to put the idea into practice. The official code is public at: \url{https://github.com/xxx/xxx}.

\subsection{Additional Settings for the Experiment with Varying Expertise Level}
We provide additional details for Section \ref{sec:varying E} (Varying Expertise Level) in the main paper. Table  \ref{tab:Expert_info} shows the expected rewards and the chosen  thresholds $R^G$ (those for selecting the good trajectories) of the three levels of $\pi^0$. 
\begin{table}[ht]
    \centering
    \begin{tabular}{|p{1cm}|p{1.2cm}|p{1.2cm}|p{1.2cm}|p{1.2cm}|}
        \hline
        \multicolumn{1}{|c|}{} & \multicolumn{2}{c|}{SafetyCarButton} & \multicolumn{2}{c|}{SafetyCarPush} \\
        \hline
        Steps & Expected return & $R_G$ & Expected return & $R_G$ \\
        \hline
        1e7 & 5.3 & 7.0 & 2.9 & 3.0 \\
        \hline
        2e7 & 8.92 & 9.0 & 5.07 & 5.0 \\
        \hline
        3e7 & 14.4 & 15.0 & 6.85 & 8.0 \\
        \hline
    \end{tabular}
    \caption{Expected rewards and thresholds  $R_G$ of for different expertise levels of $\pi^0$.}
    \label{tab:Expert_info}
\end{table}

\subsection{Relaxed  Constraints}
We provide a more detailed explanation of why relaxing the constraints is beneficial for the training of $\pi^0$.
In practical scenarios, enforcing strict constraints on trajectory generation may hinder the achievement of good trajectories due to exploration challenges and limitations in obtaining high rewards. Conversely, adopting a more relaxed constraint (constraint with higher $c_{max}$) setting could lead to higher returns, but it might also reduce the chances of satisfying the strict constraint. To address this, we initiate the training process with relaxed constraints (i.e., higher $c_{max}$) that allow us to generate a better set of  good trajectories (Figure~\ref{fig:Relaxed_expert}  illustrate an advantage of using relaxed-constrained 
 initial policy).
 %This relaxed expert serves as a starting point for training our algorithm. 
 Our experiments clearly demonstrate the significant advantages of employing relaxed constraints on the algorithm's final performance.
\begin{figure}[htbp] 
\centering
    \includegraphics[width=0.4\linewidth]{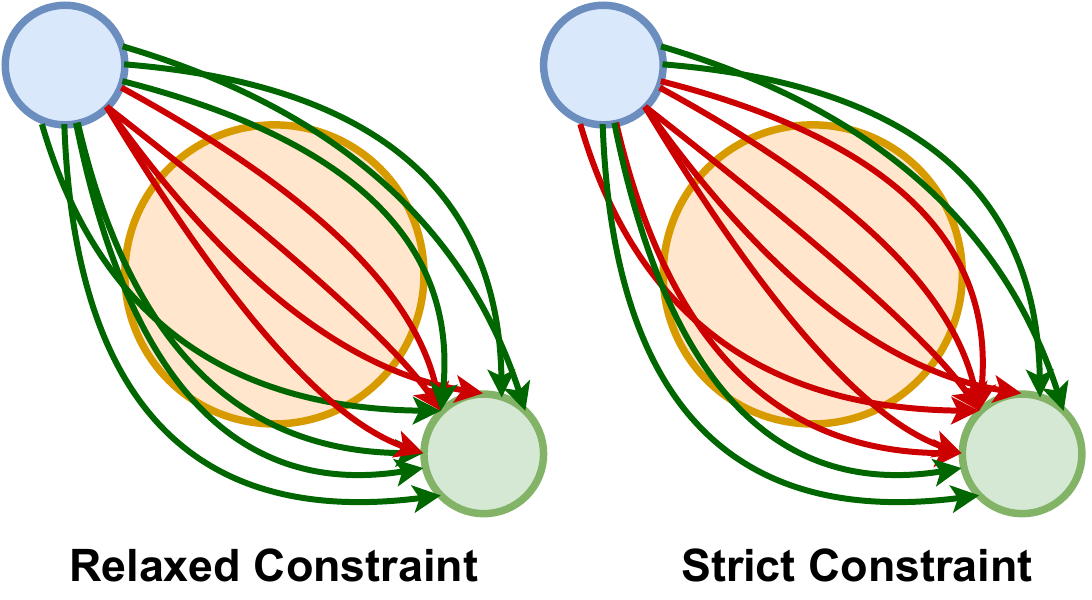} 
    \caption{Although a significant number of trajectories do not satisfy the constraints (red lines), the relaxed-constraint setting is still able to offer a considerable number of good trajectories (green lines).} 
    \label{fig:Relaxed_expert} 
\end{figure}

\subsection{Environmental Details}

\subsubsection{Safety-gym}
The Safety-gym benchmark~\citep{ray2019benchmarking}, has emerged as a highly challenging benchmark for Constraint RL. Previous research mostly focused on the easiest environment, \textit{SafetyPointGoal}, with some providing results for even simpler variations ~\cite{yang2021wcsac}. In contrast, we conducted comprehensive experiments, exploring all six challenging environments within this benchmark. These environments are illustrated in  Figure~\ref{fig:safety-gym-images} below.
\begin{figure*}[htbp]
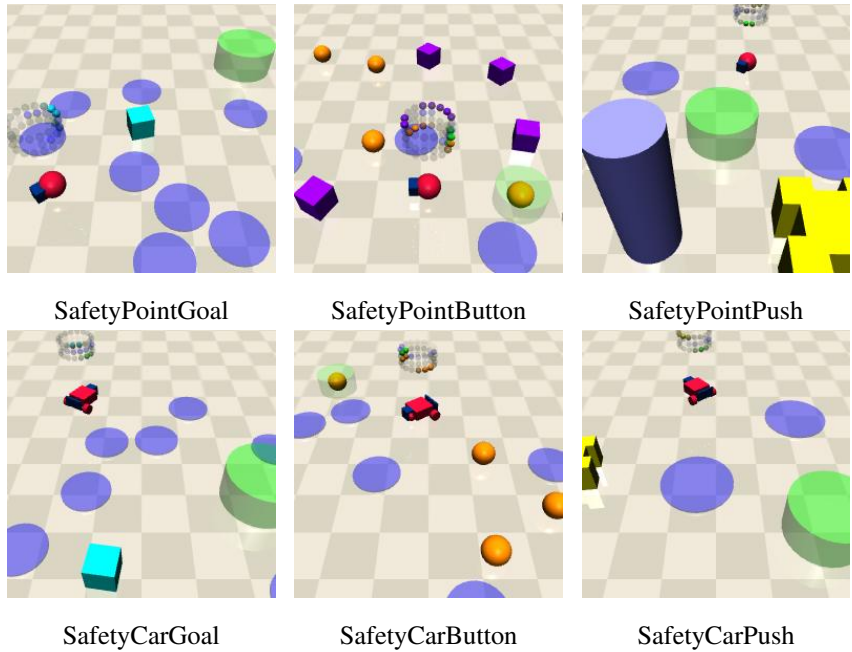

\centering
\showEnv[0.2]{0}{0}{0}{0}{Images/SafetyPointGoal1-v0/env_img}{SafetyPointGoal}
\showEnv[0.2]{0}{0}{0}{0}{Images/SafetyPointButton1-v0/env_img}{SafetyPointButton}
\showEnv[0.2]{0}{0}{0}{0}{Images/SafetyPointPush1-v0/env_img}{SafetyPointPush}

\showEnv[0.2]{0}{0}{0}{0}{Images/SafetyCarGoal1-v0/env_img}{SafetyCarGoal}
\showEnv[0.2]{0}{0}{0}{0}{Images/SafetyCarButton1-v0/env_img}{SafetyCarButton}
\showEnv[0.2]{0}{0}{0}{0}{Images/SafetyCarPush1-v0/env_img}{SafetyCarPush}
    \caption{Six different environments in Safety-Gym.}
    \label{fig:safety-gym-images}
\end{figure*}

In the first pair of environments, \textit{SafetyPointGoal} and \textit{SafetyCarGoal}, the agent's primary objective is to reach the designated goal position, represented by the green area in the visuals. This must be accomplished with skillful navigation to avoid both hazardous areas (blue regions) and obstacles (cyan blocks). The \textit{SafetyPointGoal} task features a point agent, which is relatively easier to control, allowing for efficient training. On the other hand, the \textit{SafetyCarGoal} task poses a greater challenge due to the more demanding control requirements of the car agent.

Moving on to the next set of environments, \textit{SafetyPointButton} and \textit{SafetyCarButton}, the agent encounters a fresh set of challenges. In \textit{SafetyPointButton}, the primary goal is to navigate to the correct button, indicated by the green button, while carefully avoiding incorrect buttons, hazardous areas (blue regions), and maneuvering around moving obstacles (purple blocks). The \textit{SafetyCarButton} environment shares a similar objective, but with the removal of moving obstacles to reduce training difficulty. Despite this adjustment, controlling the car agent remains challenging.

Lastly, in the last pair of environments, \textit{SafetyPointPush} and \textit{SafetyCarPush}, the agent's main task is to push the yellow block to the goal area (green region) while skillfully evading hazard areas (blue regions) and the blocking pillar (dark-blue cylinder) to increase the task difficulty. Similar to the button tasks, the pillar is removed to ease the task difficulty for the car agent. 

\subsubsection{Mujoco Circle}
The \textit{Mujoco Circle} task was developed by 
 \cite{achiam2017constrained}, involving  agents moving along  a circle centered at the origin. However, there is a constraint that the agent must remain in a area within a safety region, which is smaller than the radius of the circle and represented by the green area. To further challenge the agent, two walls are introduced that hinder its ability to move freely.% as a circle. 
 Compared to the \textit{Safety-Gym} environments, these tasks are considered less difficult because there is no randomness in the constraints imposed on the agent. The constraints are well-defined and consistent throughout the task.
 
\begin{figure*}[t]
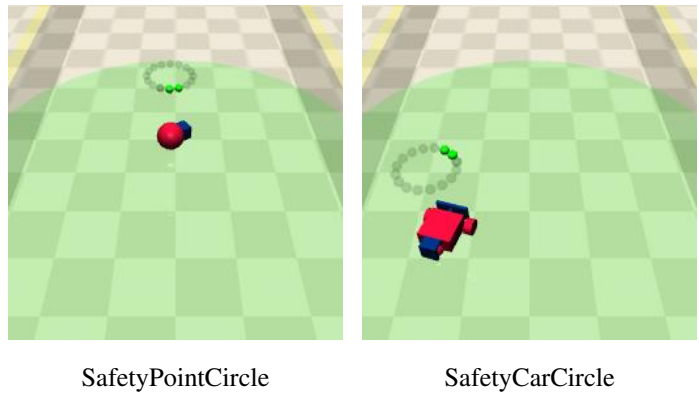

\centering
\showEnv[0.25]{0}{0}{0}{0}{Images/SafetyPointCircle1-v0/env_img}{SafetyPointCircle}
\showEnv[0.25]{0}{0}{0}{0}{Images/SafetyCarCircle1-v0/env_img}{SafetyCarCircle}
% \showEnv[0.2]{0}{0}{0}{0}{Images/SafetyDoggoCircle1-v0/env_img}{SafetyDoggoCircle}
    \caption{Mujoco Circle}
    \label{fig:mujoco-circle-images}
\end{figure*}

To evaluate the performance of different agents under increasing difficulty, two types of agents are tested: Point and Car. Each agent faces the same task but with varying degrees of complexity. The Point agent is presumably the easier to control, while the Car agent poses a higher level of difficulty due to more demanding control requirements. The illustration is in Figure~\ref{fig:mujoco-circle-images}.

By conducting experiments with these agents in the \textit{Mujoco Circle} task, we can gain valuable insights into the agents' abilities to navigate the circular environment while adhering to the constraints, allowing for a comparative analysis of their performance under increasing difficulty levels.

\subsubsection{Mujoco-velocity}
\begin{figure*}[htb]
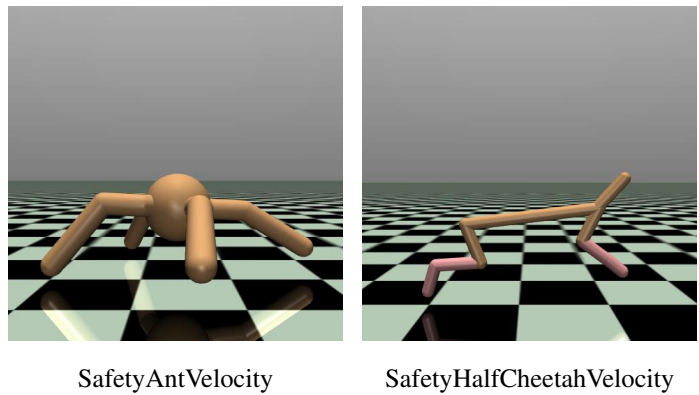

\centering
\showEnv[0.25]{0}{0}{0}{0}{Images/SafetyAntVelocity-v1/env_img}{SafetyAntVelocity}
\showEnv[0.25]{0}{0}{0}{0}{Images/SafetyHalfCheetahVelocity-v1/env_img}{SafetyHalfCheetahVelocity}
    \caption{Mujoco Velocity}
    \label{fig:mujoco-velocity-images}
\end{figure*}

We also test our algorithm with the \textit{Mujoco Velocity} domains. MuJoCo is an advanced framework specialized in simulating intricate physical systems that feature multi-joint mechanisms and interactions. A key aspect of MuJoCo's capabilities involves its integration of velocity constraints. In our experiments, these constraints play a crucial role as we impose specific velocity limits on the agent's movements. This action allows us to exert significant control over the motion of articulated entities within the simulation, effectively replicating real-world constraints and behaviors. The illustration is in Figure~\ref{fig:mujoco-velocity-images}.

It's worth noting that in our experimental setups, the MuJoCo environments that emphasize velocity demonstrate a relatively lower level of challenge due to the absence of external obstacles and random elements. Furthermore, achieving a high performance score doesn't solely rely on achieving high velocity. As a result, all algorithms tested within this context exhibit impressive learning capabilities.

\section{Additional Experiments}
{In this section, we provide experiments to answer 5 additional questions:}
{
\begin{itemize}
    \item[(Q5)] Can SIM provide a high-reward and safe policy using a relaxed-constraint expert?% while still satisfying the original constraint?
    \item[(Q6)] What happens if the cost function is inaccessible? 
    %Can our approach still provide a safe policy, or at least reduce the expected cost of the pre-trained policy?
    \item[(Q7)] Would an unconstrained problem benefit from our approach?
    \item[(Q8)]  Would our approach work with CVaR constrained problems~\cite{yang2021wcsac}?
    \item[(Q9)] Do the number of initial good trajectories impact to the final performance?
\end{itemize}
}

\subsection{Hyper-parameter selection}
We conducted all experiments on a total of 4 NVIDIA RTX A5000 GPUs and 96 core CPUs. The detailed hyper-parameters are reported in Table~\ref{tab:safety_gym_params}.
\begin{table}[h]
    \centering
    \begin{tabular}{|c|c|c|c|}
        \hline
        Hyper Parameter & Safety-gym & Mujoco-circle & Mujoco-velocity \\
        \hline
        Actor Network & $[256, 256, 256]$ & $[256, 256, 256]$ & $[64,64]$\\
        \hline
        Critic Network & $[256, 256, 256]$& $[256, 256, 256]$ & $[64,64]$ \\
        \hline
        Cost Critic Network & $[256, 256, 256]$& $[256, 256, 256]$ & $[64,64]$ \\
        \hline
        Classifier Network& $[100,100,100]$& $[100,100]$ & $[100,100]$ \\
        \hline
        Gamma & $0.99$& $0.99$& $0.99$ \\
        \hline
        lr actor & $0.0001$& $0.0001$& $0.0003$ \\
        \hline
        lr Critic & $0.0001$ & $0.0001$ & $0.0001$ \\
        \hline
        lr Cost Critic & $0.0001$ & $0.0001$ & $0.0001$ \\
        \hline
        lr Classifier & $0.01$ & $0.01$ & $0.01$ \\
        \hline
        lr Penalty & $0.01$ & $0.01$ & $0.01$ \\
        \hline
        max KL & $0.05$ & $0.05$ & $0.2$ \\
        \hline
        max iteration per update & $80$ & $80$ & $120$ \\
        \hline
        buffer size & $50,000$ & $50,000$ & $20,000$ \\
        \hline
        max episode length & $1,000$ & $500$ & $1,000$ \\
        \hline
        Classifier batch size & $4,096$ & $4,096$ & $4,096$ \\
        \hline
        $R_G$ (fixed) & $\bbE_{\tau\sim\pi^E}[R(\tau)]$ & $\bbE_{\tau\sim\pi^E}[R(\tau)]$ & $\bbE_{\tau\sim\pi^E}[R(\tau)]$ \\
        \hline
        max $R_B$ & $\max(R_G/2,R_G-5.0)$ & $\max(R_G/2,R_G-10.0)$ & $\max(R_G/2,R_G-1000.0)$ \\
        \hline
    \end{tabular}
    \caption{Hyper parameters.}
    \label{tab:safety_gym_params}
\end{table}

{Moreover, to enhance stability, we use a  Chi-square function $\phi(x) = x - \frac{1}{a}x^2$ to regularize the loss function in \eqref{eq:learn_classifer} :}
% \begin{align}
%     &\max_{K:S\times A\rightarrow (0,1)} \Big\{J(K,\pi):=\bbE_{\rho^B}[\phi(\ln(K(s,a)))]+\frac{1}{2}\bbE_{\rho^{\pi}}[\phi(\ln(1-K(s,a)))]+\frac{1}{2}\bbE_{\rho^{G}}[\phi(\ln(1-K(s,a)))]  \Big\}
% \end{align}

{\small
\begin{align}
    &\max_{K:S\times A\rightarrow (0,1)} \Big\{J(K,\pi):=\bbE_{\rho^B}[-K(s,a)-\frac{1}{a}K(s,a)^2]\nonumber\\
    &+\frac{1}{2}\bbE_{\rho^{\pi}}[K(s,a)-\frac{1}{a}K(s,a)^2]+\frac{1}{2}\bbE_{\rho^{G}}[K(s,a)-\frac{1}{a}K(s,a)^2]  \Big\}
\end{align}}

\subsection{Training Curves of BC and BC-GB}
\begin{figure}[htbp]
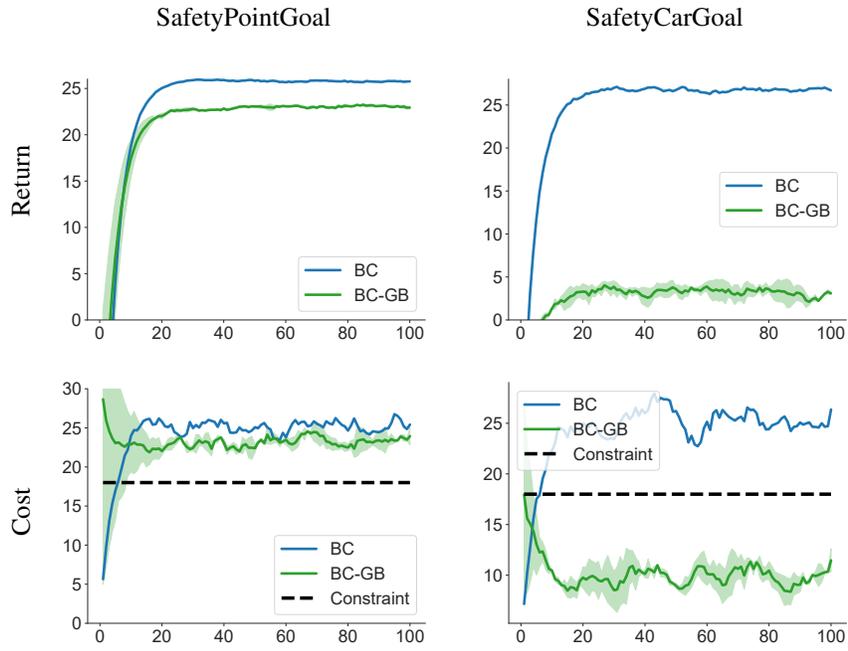

\centering
\rotatebox[origin=c]{90}{\centering Return}
\showReturn[0.3]{0}{0}{0}{0}{Images/SafetyPointGoal1-v0/BC_train_Return}{SafetyPointGoal}
\showReturn[0.3]{0}{0}{0}{0}{Images/SafetyCarGoal1-v0/BC_train_Return}{SafetyCarGoal}

\rotatebox[origin=c]{90}{\centering Cost}
\showCost[0.3]{0}{0}{0}{0}{Images/SafetyPointGoal1-v0/BC_train_cost}
\showCost[0.3]{0}{0}{0}{0}{Images/SafetyCarGoal1-v0/BC_train_cost}

    \caption{Training Curves of BC and BC-GB }
    \label{fig:BC_train_curves}
\end{figure}
Figure \ref{fig:BC_train_curves} shows the training curves of the BC and BC-GB approaches. This supplements our experimental results in Section \ref{sec:experiments - BCGD}, where we compare SIM against BC-based approaches.

\subsection{Unknown Cost }
\begin{figure}[htbp]
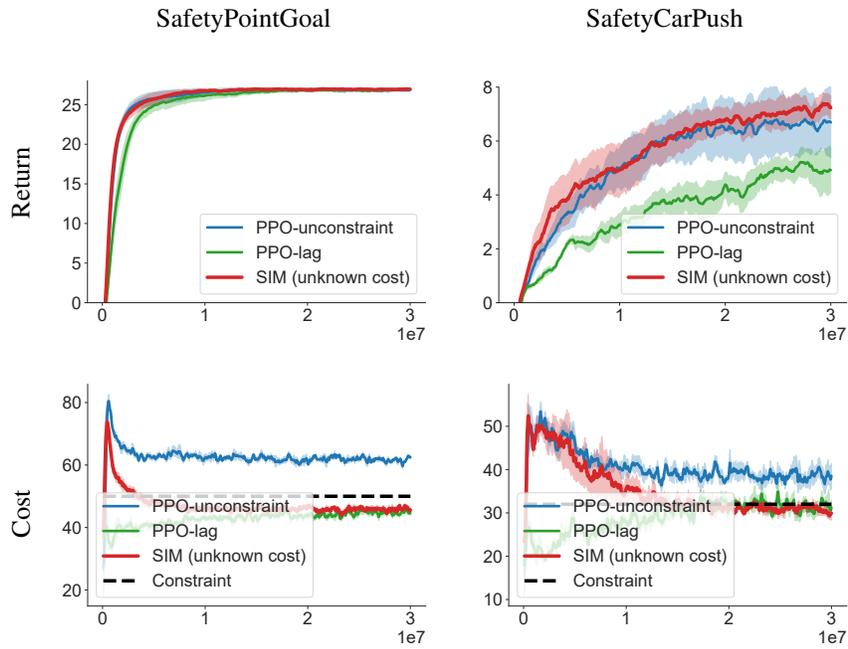

\centering
\rotatebox[origin=c]{90}{\centering Return}
\showReturn[0.3]{0}{0}{0}{0}{Images/SafetyPointGoal1-v0/UKC_return}{SafetyPointGoal}
\showReturn[0.3]{0}{0}{0}{0}{Images/SafetyCarPush1-v0/UKC_return}{SafetyCarPush}

\rotatebox[origin=c]{90}{\centering Cost}
\showCost[0.3]{0}{0}{0}{0}{Images/SafetyPointGoal1-v0/UKC_cost}
\showCost[0.3]{0}{0}{0}{0}{Images/SafetyCarPush1-v0/UKC_cost}
    \caption{Results for the unknown-cost scenario.}%. Mean value and standard error value of each run in calculated with four independent seeds.}
    \label{fig:UKC_results}
\end{figure}
To answer \textbf{Q6} (what happens if the cost function is inaccessible?), we demonstrate the capability of our method in handling the situation that the cost function is unknown. In this setting, we assume that there is an oracle telling us which trajectories are violated (i.e., the  accumulated cost is greater than $c_{max}$). In this scenario, other constrained RL algorithms do not apply, as they all rely on the cost function. On the other hand, our algorithm utilizes the identification of good and bad trajectories. Hence, it can be employed directly with the oracle's assistance. However, it's worth noting that in this situation, the oracle only aids in identifying bad trajectories, and the accessibility to good trajectories might be less potent compared to scenarios with a known cost function.
%in making an unconstrained agent aware of the cost function, even when it initially lacks the capability to interact with it.
We test our method  on two Safety-Gym environments: \textit{SafetyPointGoal} and %$C_{max}=30$ for 
\textit{SafetyCarPush}. Since other constrained RL algorithms can  be used, we  just compare our algorithm with \textit{PPO-unconstraint}  and \textit{PPO-Lag}, where the later still works with the cost function. We use \textit{PPO-unconstraint} to train the initial policy $\pi^0$. 
The results shown in Figure~\ref{fig:UKC_results} indicate that SIM is able to give safe policies while offering competitive expected rewards.

The ability to work with an unknown-cost setting to improve the safety of unconstrained policies would be valuable  in may real-life situations where costs might be difficult or even impossible to get. This
enhanced adaptability opens up new opportunities for applying RL in real-world settings.

\subsection{Enhancing Unconstrained Agent}
\begin{figure}[htbp]
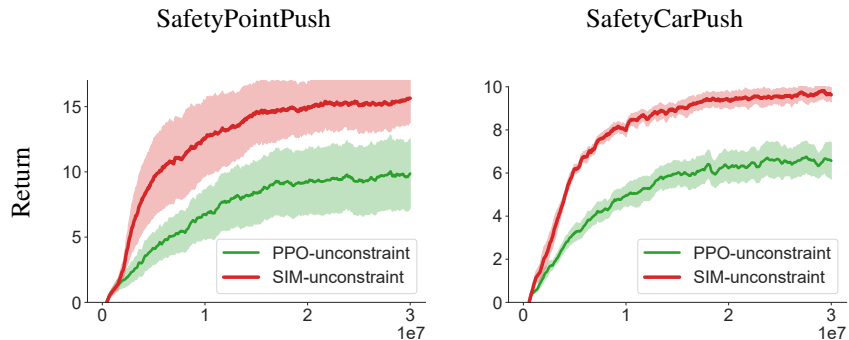

\centering
\rotatebox[origin=c]{90}{\centering Return}
\showReturn[0.3]{0}{0}{0}{0}{Images/SafetyPointPush1-v0/unconstraint_GB_return}{SafetyPointPush}
\showReturn[0.3]{0}{0}{0}{0}{Images/SafetyCarPush1-v0/unconstraint_GB_return}{SafetyCarPush}
    \caption{Comparison results for unconstrained tasks.}%. Mean value and standard error value of each run in calculated for at least four seeds.}
    \label{fig:unconstrained_GB}
\end{figure}

In this experiment, we want to answer \textbf{Q7} (would an unconstrained problem benefit from our approach?). We aim to see if SIM can improve the quality of a policy trained by an unconstrained RL algorithm (e.g., \textit{PPO-unconstraint} ). To this end, we choose two Safety-Gym enviroments \textit{SafetyPointPush and SafetyCarPush} and set the threshold $R_G$ for them as $11.0$ and $8.0$, respectively.
The comparison results  are shown in Figure~\ref{fig:unconstrained_GB}, which show that algorithm was successful in significantly improving PPO under such unconstrained settings.
In fact,  by removing the constraint, our algorithm was able to focus solely on maximizing the reward without worrying about the  costs associated with its actions. As a result, it could explore more freely and achieve better results in challenging scenarios.

Overall, this experiment showcases the efficacy of our approach in enhancing the performance of an unconstrained agent, especially in tasks where achieving high returns is challenging.

\subsection{Conditional Value at Risk}
\begin{figure}[htbp]
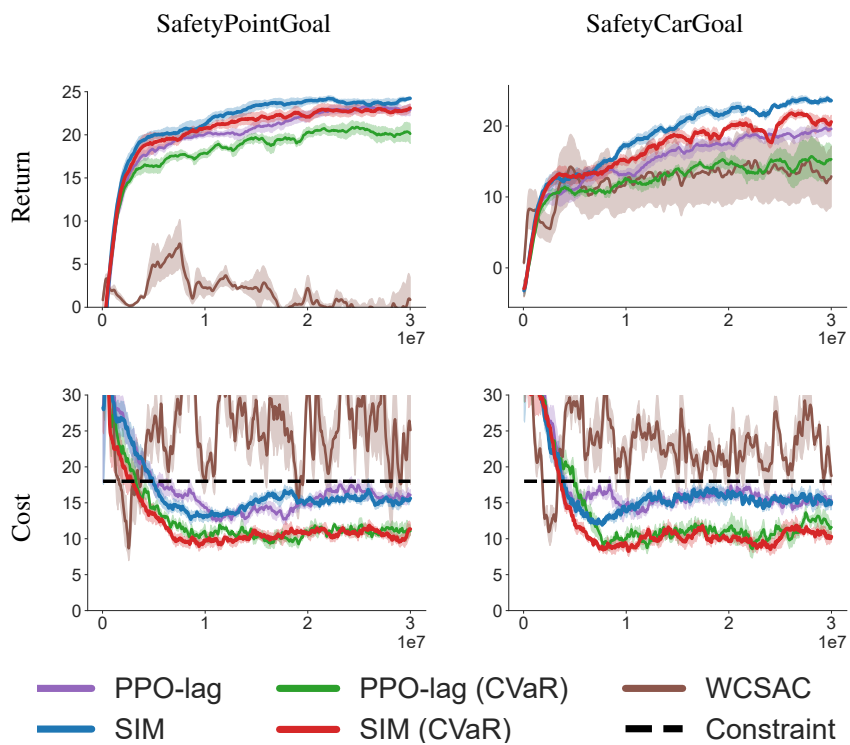

\centering
\rotatebox[origin=c]{90}{\centering Return}
\showReturn[0.3]{0}{0}{0}{0}{Images/SafetyPointGoal1-v0/CVaR_return}{SafetyPointGoal}
\showReturn[0.3]{0}{0}{0}{0}{Images/SafetyCarGoal1-v0/CVaR_return}{SafetyCarGoal}

\rotatebox[origin=c]{90}{\centering Cost}
\showCost[0.3]{0}{0}{0}{0}{Images/SafetyPointGoal1-v0/CVaR_cost}
\showCost[0.3]{0}{0}{0}{0}{Images/SafetyCarGoal1-v0/CVaR_cost}

\showLegend[0.6]{10}{10}{15}{15}{Images/CVaR_legend_bar}
    \caption{Comparison results with CVaR constraints.} %Mean value and standard error value of each run in calculated for at least four seeds.}
    \label{fig:cvar_exp}
\end{figure}
So far, we have focused on expected cost constraints. In this section, we expand our experiments to CVaR constraints~\cite{yang2021wcsac} (to address \textbf{Q8}  - would our approach work with CVaR constrained problems?). We implemented a SIM version that works with CVaR constraints by using the  following criteria to select good trajectories: $R(\tau) \geq R_G$ and $C(\tau)+\alpha^{-1}\phi(\Phi^{-1}(\alpha))\sigma(C) \leq c_{max}$. Here, $\alpha=0.5$ represents the risk level, $\phi$ and $\Phi$ denote the probability density function (PDF) and cumulative distribution function (CDF) of the standard normal distribution, respectively, and $\sigma(C)$ is the standard deviation of the cost of the collected trajectories. We compare our approach with WC-SAC ~\cite{yang2021wcsac} (a state-of-the-art CVaR constrained RL algorithm). 
Additionally, we implement a PPO version with CVaR constrained (denoted as PPO-CVaR) for the sake of comparison.

The comparison results are shown in Figure~\ref{fig:cvar_exp}. Interestingly, the original version of the WC-SAC struggled to achieve satisfactory results. However,  PPO-CVaR approach performed exceptionally well in both environments, achieving improved performance while still maintaining lower costs than PPO-Lagrangian.
Furthermore, our algorithm SIM (CVaR) outperformed all other curves, achieving the same expected cost while offering even higher expected rewards. This indicates the superior performance and effectiveness of our proposed approach compared to the other baseline methods considered.
Overall, our experiments demonstrate that incorporating CVaR and SIM significantly enhances the performance of prior algorithms.

\subsection{Number of initial expert demonstrations}
{In this section, we aim to address the impact of the number of initial expert trajectories on our final performance \textbf{Q9}. To this end, we run the experiments with different numbers of expert trajectories, taken from the set $[5,10,25,50,100,300,500]$. The detailed results are shown in Figure~\textbf{\ref{fig:fig_num_initial}}.}
{Here, it is easy to observe that having a large number of expert demonstrations can offer better performance. This is possibly because having this high number of expert demonstrations  can reduce the number of explorations in the environment and quickly understand the criteria for classifying good and bad trajectories.}
\begin{figure}[htbp]
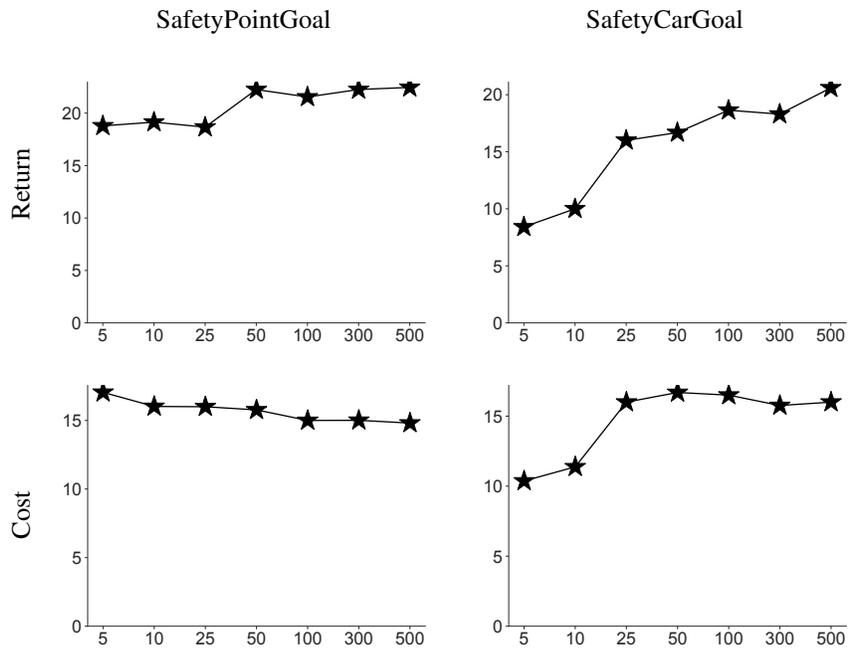

\centering
\rotatebox[origin=c]{90}{\centering Return}
\showReturn[0.3]{0}{0}{0}{0}{Images/point_goal_return}{SafetyPointGoal}
\showReturn[0.3]{0}{0}{0}{0}{Images/car_goal_return}{SafetyCarGoal}

\rotatebox[origin=c]{90}{\centering Cost}
\showCost[0.3]{0}{0}{0}{0}{Images/point_goal_cost}
\showCost[0.3]{0}{0}{0}{0}{Images/car_goal_cost}
% \showCost[0.3]{0}{0}{0}{0}{Images/SafetyDoggoCircle1-v0/cost}
    \caption{Best performance of the 7 different number of expert trajectories.}
    \label{fig:fig_num_initial}
\end{figure}
\subsection{Low-quality Initial Policy}

In practical scenarios, there is no guarantee that an initial policy would be able to generate a sufficient set of good trajectories. In particular, a low-quality initial policy would even struggle with exploring good actions. To showcase such a situation, we run our SIM with six different random initial policies and plot their training curves in Figure~\ref{fig:SIM_SafetyPointPush1-v0} clearly shows that SIM was unable to achieve high rewards for Seed \#3. This problem have raised a question 
\begin{figure}[htbp]
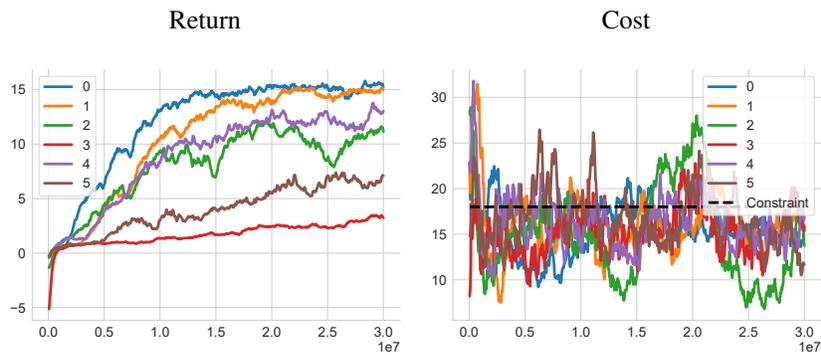

\centering
\showReturn[0.3]{10}{10}{10}{10}{Images/SafetyPointPush1-v0/GB_problem_return}{Return}
\showReturn[0.3]{10}{10}{10}{10}{Images/SafetyPointPush1-v0/GB_problem_cost}{Cost}
    \caption{Results of 6 different seeds of SIM in SafetyPointPush.}
    \label{fig:SIM_SafetyPointPush1-v0}
\end{figure}

Taking the above into consideration, we will show below that the issue can be addressed by using dynamic thresholds $R^G$. Our approach is to dynamically adjust the ``good'' threshold $R_G$ during each update step to incentivize the policy to perform at par with the highest-ranking of return trajectories within the collected trajectory set $T$: $R_G = \bbE_{\tau\sim T}[R(\tau)] + 2 \sigma_{\tau\sim T}[R(\tau)]$. 
The comparison of  the original SIM and SIM with dynamic $R_G$ for Seed \#3 is shown in  Figure~\ref{fig:dynamic_improvement}, which clearly indicates the superiority of the dynamic SIM, compared to the static version (as well as the PPO baseline). 
Notably, it's essential to acknowledge that due to reducing the threshold $R_G$, the training can achieve higher rewards when generating good trajectories is challenging. However, when the highest $R_G$ is attainable, it can not replicate the performance of fixed $R_G$.

%This section presents our approach aimed at enhancing the performance of this predicament. Let's examine Figure~\ref{fig:dynamic_improvement}. Initially, the performance of PPO-Lagrangian is notably inadequate, displaying an inability to learn on its own. The policy in this context struggles to achieve high returns for good trajectory conditions, namely $R(\tau) \geq R_G$ and $C(\tau) \leq c_{max}$. Consequently, the policy faces difficulties in generating more good trajectories and can only achieve marginal exploration by avoid bad distributions, leading to SIM performance depicted in Figure~\ref{fig:dynamic_improvement}.

\begin{figure}[htbp]
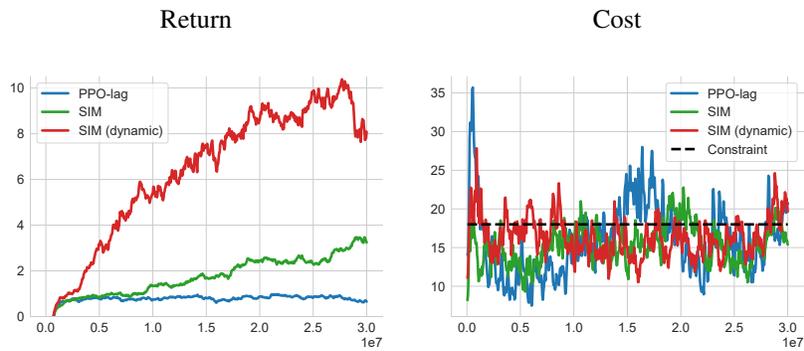

\centering
\showReturn[0.3]{0}{0}{0}{0}{Images/SafetyPointPush1-v0/dynamic_return}{Return}
\showReturn[0.3]{0}{0}{0}{0}{Images/SafetyPointPush1-v0/dynamic_cost}{Cost}
    \caption{dynamic experiment in SafetyPointPush.}
    \label{fig:dynamic_improvement}
\end{figure}

\subsection{Experiments on Mujoco domains}
\subsubsection{Mujoco Circle}
In this section, we present additional comparisons on Mujoco-Circle environments~\cite{achiam2017constrained,Safety-Gymnasium}.  Figure~\ref{fig:full_circle_result} shows our comparison results, which clearly compare the strength of SIM over other baseline methods.

\begin{figure}[htbp]
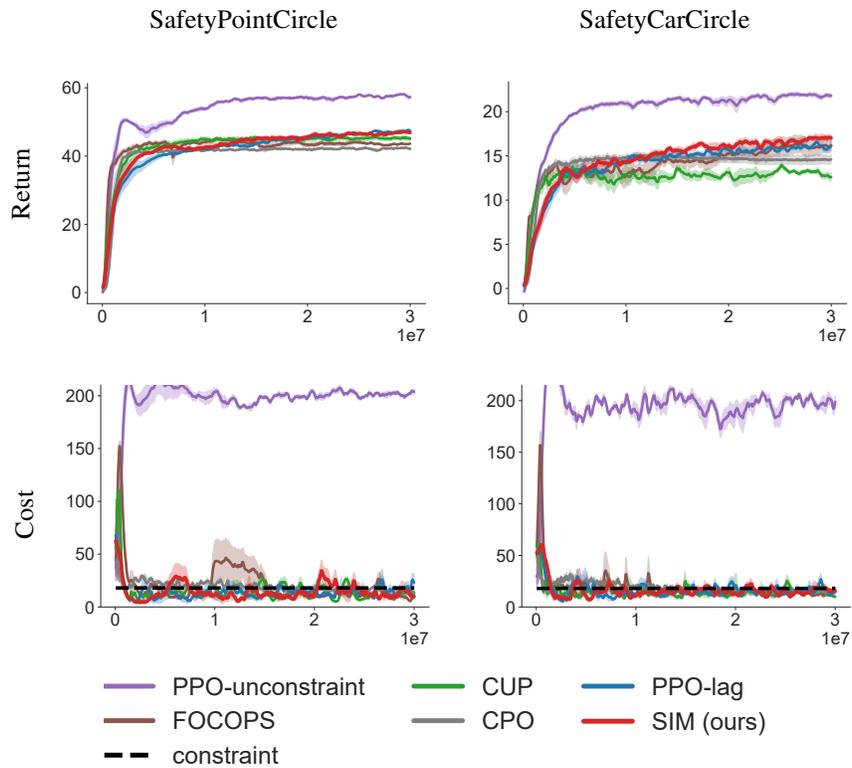

\centering
\rotatebox[origin=c]{90}{\centering Return}
\showReturn[0.3]{0}{0}{0}{0}{Images/SafetyPointCircle1-v0/return}{SafetyPointCircle}
\showReturn[0.3]{0}{0}{0}{0}{Images/SafetyCarCircle1-v0/return}{SafetyCarCircle}
% \showReturn[0.3]{0}{0}{0}{0}{Images/SafetyDoggoCircle1-v0/return}{SafetyDoggoCircle}

\rotatebox[origin=c]{90}{\centering Cost}
\showCost[0.3]{0}{0}{0}{0}{Images/SafetyPointCircle1-v0/cost}
\showCost[0.3]{0}{0}{0}{0}{Images/SafetyCarCircle1-v0/cost}
% \showCost[0.3]{0}{0}{0}{0}{Images/SafetyDoggoCircle1-v0/cost}
\showLegend[0.5]{10}{10}{15}{15}{Images/Circle_legend_bar}
    \caption{Results for \textit{Mujoco Circle} environments.}
    \label{fig:full_circle_result}
\end{figure}

\subsubsection{Mujoco Velocity}
We further provide experiments on Mujoco-Velocity environments. The performance curves reported in  Figure~\ref{fig:full_velocity_result} shows our comparison results, which clearly demonstrate the superiority of SIM over other baseline methods in satisfying the constraint during the training as well as having a high return. 

\begin{figure}[htbp]
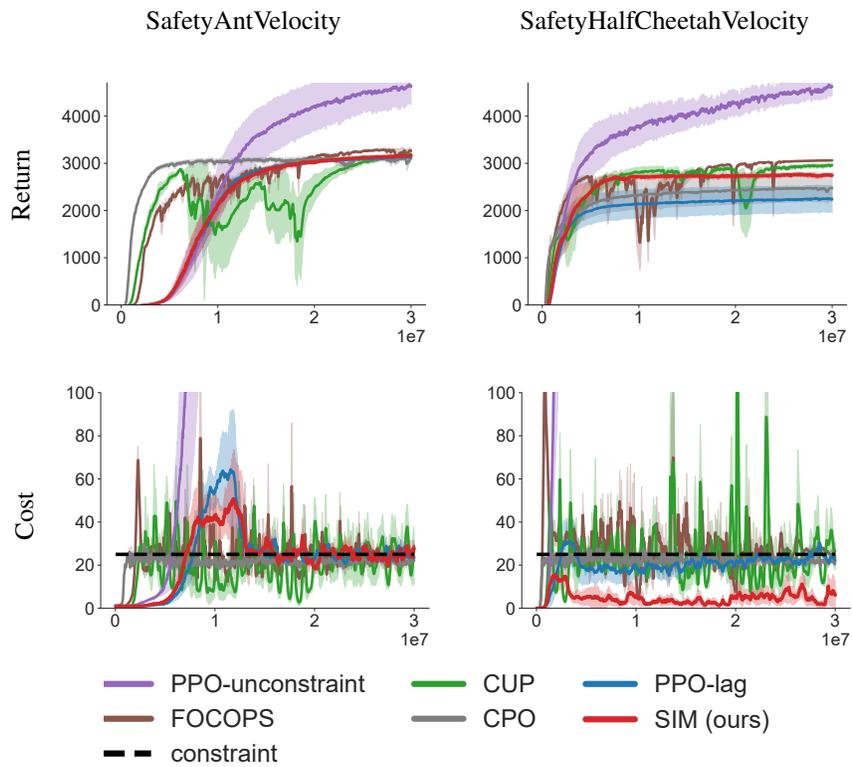

\centering
\rotatebox[origin=c]{90}{\centering Return}
\showReturn[0.3]{0}{0}{0}{0}{Images/SafetyAntVelocity-v1/return}{SafetyAntVelocity}
\showReturn[0.3]{0}{0}{0}{0}{Images/SafetyHalfCheetahVelocity-v1/return}{SafetyHalfCheetahVelocity}

\rotatebox[origin=c]{90}{\centering Cost}
\showCost[0.3]{0}{0}{0}{0}{Images/SafetyAntVelocity-v1/cost}
\showCost[0.3]{0}{0}{0}{0}{Images/SafetyHalfCheetahVelocity-v1/cost}
\showLegend[0.5]{10}{10}{15}{15}{Images/velocity_legend_bar}
    \caption{Results for \textit{Mujoco Velocity} environments.}
    \label{fig:full_velocity_result}
\end{figure}

\end{document}